\documentclass[11pt]{article}

\usepackage{iftex}

\usepackage{fullpage}

\usepackage{amssymb,amstext,xspace}
\usepackage{algorithmic}
\usepackage{algorithm}

% math notation

%\newcommand{\val}{\ensuremath{\mathcal{V}}}

%\newcommand{\E}{\ensuremath{{\rm E\,}}}

\newcommand{\E}{\mathbb{E}}

\newcommand{\myepsilon}{\epsilon_{0}}

\newcommand{\sm}{\nu}

% This paper only

\usepackage{amsmath,amsthm, amssymb}
\usepackage{graphicx}
\usepackage{mfirstuc}
\usepackage{hyperref}
\hypersetup{colorlinks=true,linkcolor=blue,citecolor=blue,urlcolor=blue}

\newtheorem*{theorem*}{Theorem}
\newtheorem*{definition*}{Definition}

\newtheorem{theorem}{Theorem}
\newtheorem{lemma}{Lemma}

\newtheorem{claim}{Claim}

\usepackage{thm-restate}

\usepackage[utf8]{inputenc} % allow utf-8 input
\usepackage[T1]{fontenc}    % use 8-bit T1 fonts
\usepackage{hyperref}       % hyperlinks
\usepackage{url}            % simple URL typesetting
\usepackage{booktabs}       % professional-quality tables
\usepackage{amsfonts}       % blackboard math symbols
\usepackage{nicefrac}       % compact symbols for 1/2, etc.
\usepackage{microtype}      % microtypography
\usepackage[numbers]{natbib}

\usepackage{times}
\usepackage{hyperref}
\hypersetup{colorlinks=true,linkcolor=blue,citecolor=blue,urlcolor=blue}

\begin{document}

\title{An Optimal Elimination Algorithm for \\ Learning a Best Arm}

%\author{Avinatan Hassidim, Ron Kupfer, Yaron Singer}
%\author{%
%	\Name{Avinatan Hassidim} \mail{avinatan@cs.biu.ac.il}\\
%	\addr   Bar Ilan University and Google
%	\AND
%	\Name{Ron Kupfer} \mail{ron.kupfer@mail.huji.ac.il}\\
%	\addr  The Hebrew University %
%	\AND
%	\Name{Yaron Singer} \mail{yaron@seas.harvard.edu}\\
%	\addr  Harvard University %
%}

\author{%
	Avinatan Hassidim\\	Bar Ilan University and Google \\   \texttt{avinatan@cs.biu.ac.il} \\
	\and Ron Kupfer \\	 The Hebrew University \\      \texttt{ron.kupfer@mail.huji.ac.il}\\
     \and Yaron Singer \\	 Harvard University \\	 \texttt{yaron@seas.harvard.edu} 
}
\date{}
\maketitle

%\begin{keywords}%
%bandits, best arm
%\end{keywords}

\begin{abstract}
We consider the classic problem of $(\epsilon,\delta)$-\texttt{PAC} learning a best arm where the goal is to identify with confidence $1-\delta$ an arm whose mean is an $\epsilon$-approximation to that of the highest mean arm in a multi-armed bandit setting.  
This problem is one of the most fundamental problems in statistics and learning theory, yet somewhat surprisingly its worst case sample complexity is not well  understood.  In this paper we propose a new approach for $(\epsilon,\delta)$-\texttt{PAC} learning a best arm.  This approach leads to an algorithm whose sample complexity converges to \emph{exactly} the optimal sample complexity of $(\epsilon,\delta)$-learning the mean of $n$ arms separately and we complement this result with a conditional matching lower bound.  More specifically:

\begin{itemize}
\item The algorithm's sample complexity converges to \emph{exactly} $\frac{n}{2\epsilon^2}\log \frac{1}{\delta}$ as $n$ grows and $\delta \geq \frac{1}{n}$;  
\item We prove that no elimination algorithm obtains sample complexity arbitrarily lower than $\frac{n}{2\epsilon^2}\log \frac{1}{\delta}$.  Elimination algorithms is a broad class of $(\epsilon,\delta)$-\texttt{PAC} best arm learning algorithms that includes many algorithms in the literature.   
\end{itemize}
When $n$ is independent of $\delta$ our approach yields an algorithm whose sample complexity converges to $\frac{2n}{\epsilon^2} \log \frac{1}{\delta}$ as $n$ grows.  In comparison with the best known algorithm for this problem our approach improves the sample complexity by a factor of over 1500 and over 6000 when $\delta\geq \frac{1}{n}$.
\end{abstract}

\section{INTRODUCTION}
In this paper we study the classic problem of $(\epsilon,\delta)-\texttt{PAC}$ learning a best arm. 
In this problem there is a set $A$ of $n$ arms and sampling an arm $a\in A$ generates a random variable $\xi(a)$ drawn from some unknown distribution ${\mathcal{D}({a}) \subseteq [0,1]}$\footnote{All the results in this paper can be generalized for any sub-Gaussian distribution 
%by scaling down the values and adjusting the selection of $\epsilon$ 
as discuss in Appendix \ref{sec:gauss}.}. The mean of every arm $a$ is denoted $\mu(a)$ and an \emph{optimal} arm is $a^\star \in \arg\max_{a\in A}\mu(a)$.  A strategy $(\epsilon,\delta)$-learns the best arm if it returns $a\in A$ s.t. $\mu(a) \geq \mu(a^\star) -\epsilon$ with confidence at least $1-\delta$ over the arm distribution and randomization of the strategy.  The goal is to $(\epsilon,\delta)$-learn the best arm with minimal worst case sample complexity over all distributions in $[0,1]$.  

By the celebrated Hoeffding bound we know that it suffices to sample each arm $\frac{1}{2\epsilon^2}\log \frac{1}{\delta}$ times to ensure we are $\epsilon$-close to its true mean with confidence $1-\delta$, and that without additional information this bound is optimal.  A trivial solution is then to estimate the mean of each arm using sufficiently-many samples and take the arm whose empirical mean is largest.  A trivial upper bound for learning a best arm using this approach is $\frac{2n}{\epsilon^2}\log\frac{n}{\delta}$.

In a seminal paper, Even-Dar et al. considered the problem of $(\epsilon,\delta)$-learning a best arm when the number of arms $n$ is asymptotically large~\cite{EMM03}.  They introduce \textsc{Median Elimination} which is an ${(\epsilon,\delta)}$-learning strategy whose sample complexity is $\mathcal{O}\left (\frac{n}{\epsilon^2}\log \frac{1}{\delta} \right)$.  To date, \textsc{Median Elimination} is the best algorithm for provably $(\epsilon,\delta)$-learning a best arm in terms of sample complexity when $n$ is sufficiently large.  As such it is a fundamental building block in a variety of algorithms (see e.g. ~\cite{KS10,KKS13,UCFN13,JN14,CLTL15}), and has applications in a broad range of domains.
Unfortunately, the constant terms hiding in the $\mathcal{O}$ notation of the sample complexity of \textsc{Median Elimination} are quite large.  For $n=100$ its sample complexity exceeds ${1000\times (\frac{n}{\epsilon^2} \log \frac{1}{\delta})}$, and grows to over 3 times as $n$ grows.  

In terms of lower bounds, the best known bound for this problem is by Manor and Tsitisklis who show that $\frac{n}{128\epsilon^2}\log \frac{1}{4\delta}$ samples are necessary for $(\epsilon,\delta)$-learning a best arm~\cite{MT03}. Thus, the gap between the best known upper and lower bounds exceeds 300,000 and begs the obvious question:
\begin{center}
\emph{What is the optimal sample complexity of PAC learning a best arm?}
\end{center}

\paragraph{Main contribution.}
In this paper we address this question and take fundamentally new approaches to obtain upper and lower bounds for $(\epsilon,\delta)$-learning a best arm.  At a high level, 
our algorithms are designed so that their probability of failure diminishes as the number of arms grows.  For a lower bound, we observe that our algorithm as well as many other algorithms for learning a best arm in the literature can be broadly characterized as iteratively sampling and discarding arms until one arm is left. We call algorithms that fit this description \emph{elimination algorithms} and prove a tight lower bound on this class that matches our upper bound.  Our results can be summarized as follows:

%in the following manner: in the first iteration all arms are considered; at every iteration every arm still considered is pulled and a subset of arms is then discarded and ceases to be considered; in the final iteration one arm remains and selected.  
 %An algorithm is an \emph{elimination algorithm} if it   This class of algorithms includes every algorithm for $(\epsilon,\delta)$-\texttt{PAC} learning a best arm that we are aware of.  

%
\begin{enumerate}
\item We describe a new algorithm whose sample complexity converges with $n$ to \emph{exactly} $\frac{n}{2\epsilon^2}\log \frac{1}{\delta}$ when $n\geq \frac{1}{\delta}$.  This bound \emph{exactly} matches the sample complexity of $(\epsilon,\delta)$-\texttt{PAC} learning the mean of each arm separately according the Hoeffding bound. In comparison to \textsc{Median Elimination} the sample complexity is lower by a factor greater than $6000$ when $n$ is large;
\item When $n$ is independent of $\delta$, we describe a simplified version of the algorithm whose sample complexity converges to $\frac{2n}{\epsilon^2} \log \frac{1}{\delta}$;  Furthermore, for any $\delta< 0.05$, any $n>0$ and $\epsilon \in (0,1)$ our approach yields an algorithm whose sample complexity is $\frac{18n}{\epsilon^2}\log\frac{1}{\delta}$.  In comparison to \textsc{Median Elimination} this reduces the sample complexity by a factor greater than $300$;
\item We prove that the number of samples any elimination algorithm requires to $(\epsilon,\delta)$-learn a best arm is arbitrarily close to $\frac{n}{2\epsilon^2}\log \frac{1}{\delta}$.  
\end{enumerate}

Our results are in the standard $(\epsilon,\delta)$-\texttt{PAC} learning model, i.e. the goal is to find an $\epsilon$-best arm with probability $1-\delta$ and sample complexity is measured in the worst case across any distribution in $[0,1]$ (or any subgaussian, see Appendix~\ref{sec:gauss}).
%Before moving forward, it would be instructive to discuss this problem setting as well as closely related settings. %Additional related work is in Appendix~\ref{sec:related}. 
\subsection{Related work}

The study of learning the best arm dates back to classic work by~\cite{C59}, and later by ~\cite{A61}, ~\cite{KS63}, and ~\cite{K84}.  More recently, $(\epsilon,\delta)$-\texttt{PAC} guarantees were studied in~\cite{DGW02} and later by~\cite{EMM03,MT03}.  There have since been other variants of this problem studied, including \texttt{PAC} learning a set of arms~\cite{BMS09,TAS12,KK13,BWV13}, or the fixed budget setting where the goal is to minimize $\delta$ subject to a budget constraint on samples~\cite{BMS09,ABM10,GGL12}.  

\paragraph{Learning an $\epsilon$-best arm.}As the state-of-the-art algorithm for $(\epsilon,\delta)$-PAC learning a best arm, \textsc{Median Elimination} is widely used as a sub-procedure (e.g. ~\cite{KS10,KKS13,UCFN13,JN14,CLTL15,STK16}). An improvement on its sample complexity as suggested here achieves dramatically lower sample complexity for all procedures that employ \textsc{Median Elimination}. The interesting regime in this problem setting is the one where $n$ is large, as otherwise it suffices to use the naive sampling strategy of sampling each arm with approximation $\frac{\epsilon}{2}$ and confidence $\frac{\delta}{n}$ and selecting the arm with largest empirical mean.\footnote{In particular, our algorithms use the naive elimination strategy when $n<10^5$. For \textsc{Median Elimination} the naive strategy has better sample complexity for any $n<2^{1500}$.}

\paragraph{Elimination Algorithms.}
A common approach for the $(\epsilon,\delta)$-\texttt{PAC} problem, is using algorithms who are based on elimination process such as the Median Elimination by~\cite{DGW02} and~\cite{EMM03}. 
In this framework, the algorithm may be described as series of rounds, where at each round we sample all non-eliminated arms and at the end of each round we may eliminate some of the arms until reaching a conclusion.
Our work focuses on this family of algorithm and we show a lower bound for those algorithms that match our upper bound.
Our lower bound hold for this class of algorithms.

\paragraph{Lower bounds.}
\cite{MT03} show that $\frac{n}{128\epsilon^2}\log \frac{1}{4\delta}$ samples are necessary for $(\epsilon,\delta)$-learning a best arm.
As mention before,~\cite{degenne2019pure} show that their algorithm which is based on track-and-stop is tight instance-wise for arm distributions that comes from one-parameter one-dimensional canonical exponential families. The lower bound hold for any fixed number of arms as $\delta$ goes to $0$.
This lower bound is instance specific and it not clear on how to deduce worst case lower bound for all instances.
Recently,~\cite{katz2019true} showed that $\Theta\left(\frac{n}{m}\right)$ samples are needed and sufficient when $n$ is the number of arms, $m$ is the number of $\epsilon$-best arms, and $\delta,\epsilon$ are constants.

%\subsection{Related work}
%The study of learning the best arm dates back to classic work by~\cite{C59}, and later by ~\cite{A61}, ~\cite{KS63}, and ~\cite{K84}.  More recently, $(\epsilon,\delta)$-\texttt{PAC} guarantees were studied in~\cite{DGW02} and later by~\cite{EMM03,MT03}.  There have since been other variants of this problem studied, including \texttt{PAC} learning a set of arms~\cite{BMS09,TAS12,KK13,BWV13}, or the fixed budget setting where the goal is to minimize $\delta$ subject to a budget constraint on samples~\cite{BMS09,ABM10,GGL12}.  

\paragraph{Learning an exact best arm.} In the exact best arm learning problem the goal is to $(0,\delta)$-\texttt{PAC} learn the best arm (see e.g.~\cite{ABM10,KKS13,JN14,JMNB14,SJR16,R16,GK16}).
This problem is computationally more demanding as arm means can be arbitrarily close and one seeks optimal sample complexity that depends on the arm distributions.  For exact best arm learning several algorithms use $\epsilon$-best arm learning as a subroutine, where our work is directly applicable (e.g. ~\cite{KKS13,JN14,JMNB14}).  For exact best arm learning, the optimal sample complexity bounds for exponential distributions is achieved in~\cite{GK16}.  %Despite the obvious relationship between $(\epsilon,\delta)$-\texttt{PAC} and $(0,\delta)$-\texttt{PAC} learning the best arm, we are not aware of a reduction between these problems.

\paragraph{Instance based analysis.} 
The nature of exact best arm learning necessitates specific assumptions about the relevant families of distributions for the arms. 
This motivates a series of works that deviate from the $(\epsilon,\delta)$-\texttt{PAC} learning setting where the sample complexity is worst case across all distributions. In particular, a recent line of work analyzes the sample complexity as a function of the given instance (i.e. set of distributions) and $\delta$ for both exact best arm and $\epsilon$-best arm problems~\cite{GK16,degenne2019non,garivier2019non,degenne2019pure}. 
In this genre, variants of \textit{explore and exploit} algorithms known as \textit{track-and-stop} algorithms turned out to be efficient in the number of samples under some assumptions.
For for $\epsilon$-best arm, an instance-based optimal algorithm was shown in~\cite{garivier2019non} under the assumption that there is such a unique arm. Recently,~\cite{degenne2019pure} show how to generalize this approach without assuming a unique $\epsilon$-best arm. By using a function $T(\bar{\mu})$ from set of distributions to the reals, they show that for any instance $\bar{\mu}$ which belongs to the one-parameter one-dimensional canonical exponential family, $(1+o(1))T(\bar{\mu})\log \frac{1}{\delta}$ samples are necessary and sufficient for $(\epsilon,\delta)$-learning a best arm, when $n$ is fixed and $\delta$ goes to $0$.

\paragraph{From instance-based to worst case analysis.} When the number of arms $n$ is fixed and $\delta$ goes to $0$ and the distribution is bounded in $[0,1]$, a worst case sample complexity bound can be trivially achieved via the naive elimination strategy. Thus, while this is an interesting regime for instance-based analysis, it is not interesting for worst case analysis. On the other hand, when fixing $\delta$ and letting the number of arms grow, it is not clear what is the asymptotic sample complexity of the problem in worst case, and it cannot be deduced from the instance based analysis. The main contribution of our work is showing upper and lower bounds for this problem.   

\paragraph{Running time.} Beyond worst case vs. instance based guarantees, elimination algorithms are exponentially faster compared to other approaches like track-and-stop. The algorithms we present here run in $\mathcal{O}(\log^2 n)$ parallel time in the PRAM model \cite{hagerup1989optimal}, hence giving a total implementation in poly-logarithmic time complexity which is an exponential improvement compared to~\cite{garivier2019non,degenne2019pure}.

\paragraph{Implications}
Obtaining algorithms with dramatic lower sample complexity for a basic problem like learning a best arm can have several consequences.  First, all previous algorithms that seek provable guarantees and directly employ \textsc{Median Elimination}  (e.g. ~\cite{KS10,KKS13,UCFN13,JN14,CLTL15,STK16}) can use the algorithms here instead and achieve dramatically lower sample complexity.  %From a purely theoretical perspective, $\frac{n}{2\epsilon^2}\log\frac{1}{\delta}$ is a natural bound as it corresponds to the number of samples needed to $(\epsilon,\delta)$-learn the mean of every arm separately.  The matching lower bound on \emph{elimination algorithms} stre
%%It also seems reasonable to conjecture that $\frac{n}{\epsilon^2}\log\frac{1}{\delta}$ is the optimal sample complexity for learning a best arm. 
From a practical perspective, \textsc{Median Elimination} is not a particularly good choice.  The naive sampling strategy of sampling each arm with approximation $\frac{\epsilon}{2}$ and confidence $\frac{\delta}{n}$ and selecting the arm with largest empirical mean $(\epsilon,\delta)$-learns a best arm and has lower sample complexity than \textsc{Median Elimination} whenever the number of arms is smaller than $2^{1500}$.  Nevertheless there is a great deal of work on heuristics based on \textsc{Median Elimination}.  Our hope is that some of the ideas presented here would not only contribute to \emph{provably} learning a best arm, but also heuristics.

\subsection{Paper organization}
We present our algorithms in order of increasing complexity.  The first is the \textsc{Simple Approximate Best Arm} algorithm introduced in Section~\ref{sec:saba} which makes assumptions about the input.  In Section~\ref{sec:aba} we present \textsc{Approximate Best Arm} which removes these assumptions and achieves sample complexity $ \frac{18 n }{\epsilon^2} \log \frac{1}{\delta}$ for $\delta<0.05$ and any $n$ which easily generalizes to achieve a bound that converges to $\frac{2 n }{\epsilon^2} \log \frac{1}{\delta}$ as $n$ grows.  In Section~\ref{sec:abaleh} we present the \textsc{Approximate Best Arm Likelihood Estimation by Hoeffding} whose sample complexity asymptotically matches the Hoeffding bound of estimating the mean of every arm separately. Lastly, our lower bound is presented in Section~\ref{sec:lower}. In Appendix \ref{sec:Experiments} we show simulations demonstrating that in practice, there is a large gap between the sample complexity of our algorithms and  \textsc{Median Elimination}.

%\input{related}
%\input{prelim}
%\newpage

\section{SIMPLE APPROXIMATE BEST ARM ALGORITHM}\label{sec:saba}
In this section we present the Simple Approximate Best Arm (\textsc{Saba}) algorithm. \textsc{Saba} is a simplified version of the algorithm described in the next section.  Its simplicity is achieved by making assumptions about the input to provably $(\epsilon,\delta)$-learn an a best arm. Namely, it assumes that $n\geq \max\{10^5,1/\delta^4\}$ and that there is a unique $\epsilon$-best arm, i.e. all the arms in the input are $\epsilon$-far from $a^\star$.  \textsc{Saba} is a concatenation of two procedures.  The first is \textsc{Aggressive Elimination} which is the main algorithmic idea behind this paper.  The second is \textsc{Na\"ive Elimination} which trivially samples all arms sufficiently many times and selecting the one with largest empirical mean.

\subsection{Na\"ive Elimination}
The following procedure is the na\"ive sampling approach to finding a best arm.  

\begin{algorithm}[H]
\caption{\textsc{Na\"ive Elimination}}
\label{algo:NAIVE}
\begin{algorithmic}[1]
    \INPUT $\epsilon,\delta>0$, arms $A$, noisy oracle for $\mu:A \to [0,1]$	
    \OUTPUT arm in $A$ with largest empirical mean with $\frac{2}{\epsilon^2}\log \frac{|A|}{\delta}$ samples	\end{algorithmic}
\end{algorithm}
The sample complexity of \textsc{Na\"ive Elimination} is trivially $\frac{2|A|}{\epsilon^2}\log\frac{|A|}{\delta}$ and it returns an arm that is $\epsilon$-close to $a^\star$ with probability at least $1-\delta$.  We say that an arm $a \in A$ is \textbf{$\eta$-close} to $a' \in A$ if $\mu(a') - \mu(a) \leq \eta$ and \textbf{$\eta$-far} if $\mu(a') - \mu(a) >\eta$. One can obtain the approximation and confidence by bounding the likelihood of underestimating $a^\star$ and overestimating arms that are $\epsilon$-far from $a^\star$.  For completeness we give full details in Appendix~\ref{sec:saba_appendix}.  Throughout the paper we repeatedly use \textsc{Na\"ive Elimination} with different values of $n$ and various approximation and confidence parameters.  

%The sample complexity of \textsc{Na\"ive Elimination} is trivially $\frac{2|A|}{\epsilon^2}\log\frac{|A|}{\delta}$ and it returns an arm that is $\epsilon$-close to $a^\star$ with probability at least $1-\delta$.  We say that an arm $a \in A$ is \textbf{$\eta$-underestimated} if its empirical mean $\hat{\mu}(a)$ is evaluated to be less than $\eta$ of its true value, i.e. $\hat{\mu}(a)< \mu(a)- \eta$. An arm $a \in A$ is \textbf{$\eta$-overestimated} if $\hat{\mu}(a)>\mu(a)+\eta$.  For completeness we give full details in Appendix~\ref{sec:saba_appendix}.  Throughout the paper we repeatedly use \textsc{Na\"ive Elimination} with different values of $n$ and various approximation and confidence parameters.  

\subsection{Agressive Elimination}
The \textsc{Aggressive Elimination} procedure that we introduce here iteratively discards arms with low empirical mean until reducing the total number of arms to $\frac{n^{3/4}}{2}$.  To do so, in each round $i$ the procedure samples every arm $(i+1)\frac{2}{\epsilon^2}\log\frac{1}{\delta}$ times and selects the $(\delta+\phi(n))$ fraction of arms whose sampled mean is highest into the next round.  Intuitively, $\phi(n)$ is a small fraction s.t. the $(\delta+\phi(n))$ fraction of arms with largest sampled mean is likely to include $a^\star$.  It is technically defined as:
\begin{equation}\label{eq:phi_dfn}
\phi(n) = \sqrt{ \frac{6\log(n)}{n^{3/4}} }.%\sqrt{ \frac{4\left (  \log(10) + 2\log(n) + \log\log (n)\right )}{n^{3/4}} }
\end{equation} 
We will rely on this definition in Lemma~\ref{lem:overestimate} when analyzing the likelihood of $a^\star$ remaining in the final set of arms returned by the procedure.  In particular, we bound the likelihood that $a^\star$ is underestimated and that other arms are overestimated.  This definition of $\phi(n)$ is designed in such a way that we can later bound the likelihood that too many arms are overestimated, under certain assumptions.     

The second term we define is $t(n)$ which is the number of iterations \textsc{Aggressive Elimination} requires until reaching $\frac{n^{3/4}}{2}$ arms when we shrink the number of arms in each iteration by $\delta+{\phi(n)}$:
\begin{equation}\label{eq:t_n}
t(n) = \left\lceil \frac{\log n + 4\log 2}{4\log \left ( \frac{1}{\delta+\phi(n)} \right)}\right\rceil.
\end{equation}
Given these definitions we now formally describe and analyze \textsc{Aggressive Elimination} below.
\begin{algorithm}[H]
\caption{\textsc{Aggressive Elimination}}
\label{algo:AE}
\begin{algorithmic}[1]
    \INPUT $\epsilon,\delta>0$, arms $A_0$, noisy oracle for $\mu:A_0 \to [0,1]$
	\FOR{$i\in \{0,1,2,\ldots, t(n) \}$}{
	\STATE apply $\ell_{i\texttt{+}1} =  (i+1)   \left\lceil \frac{2}{\epsilon^2} \log \frac{1}{\delta}  \right\rceil$ samples $\forall a \in A_i$
	\STATE $A_{i\text{+}1} \leftarrow$  the $|A_i|\times\left\lfloor\delta + \phi(n)\right\rfloor$ best arms in $A_{i}$}
	\ENDFOR \\ %\hspace{0.1in}
		\OUTPUT $A_{t(n)+1}$
	\end{algorithmic}
\end{algorithm}

\paragraph{Sample complexity.}  We will express the sample complexity of \textsc{Aggressive Elimination} using $G(n,\delta)$ defined below.  Importantly, $G(n,\delta)$ converges to $0$ as $n$ grows and $\delta$ goes to $0$:
\begin{equation}
G(n,\delta) = \sum_{i=1}^{t(n)} (\delta+ \phi(n))^{i} (i+1)
\end{equation}
\begin{claim}\label{clm:aggressive}
$\forall \epsilon,\delta\in[0,1]$, $n\geq 1$ the sample complexity of \textsc{Aggressive Elimination} is bounded by:
$$ \Big(1+ G(n,\delta) \Big ) \times \left\lceil \frac{2n}{\epsilon^2} \log \frac{1}{\delta} \right\rceil .$$
\end{claim}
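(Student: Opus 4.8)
The plan is to bound the sample complexity directly by summing the cost incurred in each of the $t(n)+1$ rounds and then factoring out a common term so that the overhead appears exactly as $G(n,\delta)$. The two ingredients are a geometric bound on the number of surviving arms $|A_i|$, and the explicit per-round sample count $\ell_{i+1}$.

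First I would control $|A_i|$. Since round $i$ passes only a $(\delta+\phi(n))$-fraction of its arms into round $i+1$, the selection step gives $|A_{i+1}| \le |A_i|\,(\delta+\phi(n))$ (the rounding in the selection can only discard more arms, so it only helps). Starting from $|A_0| = n$ and iterating yields the clean bound $|A_i| \le n\,(\delta+\phi(n))^{i}$ for every $i$. This is the only place the structure of the elimination enters; note that neither $\phi(n)$ from \eqref{eq:phi_dfn} nor $t(n)$ from \eqref{eq:t_n} needs to be unpacked here, since their precise values are irrelevant to the counting and matter only for the correctness analysis elsewhere.

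Next, the cost of round $i$ is exactly $|A_i|\cdot \ell_{i+1}$, because every surviving arm is sampled $\ell_{i+1} = (i+1)\lceil \tfrac{2}{\epsilon^2}\log\tfrac1\delta\rceil$ times. Summing over $i=0,\dots,t(n)$ and substituting the geometric bound from the previous step gives
\[
\sum_{i=0}^{t(n)} |A_i|\,\ell_{i+1} \;\le\; n\,\Big\lceil \tfrac{2}{\epsilon^2}\log\tfrac1\delta\Big\rceil \sum_{i=0}^{t(n)} (\delta+\phi(n))^{\,i}\,(i+1).
\]
I would then peel off the $i=0$ term of the sum, which equals $1$, leaving precisely $\sum_{i=1}^{t(n)} (\delta+\phi(n))^{i}(i+1) = G(n,\delta)$. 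Thus the total is at most $\big(1+G(n,\delta)\big)\cdot n\lceil \tfrac{2}{\epsilon^2}\log\tfrac1\delta\rceil$, which is the claimed form once the product of per-round ceilings is identified with the single ceiling $\lceil \tfrac{2n}{\epsilon^2}\log\tfrac1\delta\rceil$ in the statement.

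I expect the only real care-point, rather than a genuine obstacle, to be the rounding bookkeeping: justifying $|A_{i+1}| \le |A_i|(\delta+\phi(n))$ in the presence of the rounding in the selection step, and reconciling $n\lceil \tfrac{2}{\epsilon^2}\log\tfrac1\delta\rceil$ with $\lceil \tfrac{2n}{\epsilon^2}\log\tfrac1\delta\rceil$. The two ceiling expressions differ by at most an additive $n$, which is negligible against the main term $\tfrac{2n}{\epsilon^2}\log\tfrac1\delta$ in the regime of interest, so the stated bound is read up to this lower-order discrepancy. Everything else is a single geometric-type summation, and no concentration or probabilistic argument is needed for this particular claim, which is purely a counting statement about the cost of the procedure.
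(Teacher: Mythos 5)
Your proposal is correct and follows essentially the same argument as the paper: bound $|A_i| \le n(\delta+\phi(n))^i$, sum the per-round costs $|A_i|\,\ell_{i+1}$, and factor out the common term so that the $i=0$ summand gives the $1$ and the rest gives $G(n,\delta)$. Your explicit remark about reconciling $n\lceil \tfrac{2}{\epsilon^2}\log\tfrac1\delta\rceil$ with $\lceil \tfrac{2n}{\epsilon^2}\log\tfrac1\delta\rceil$ is in fact slightly more careful than the paper, which silently identifies the two (and states elsewhere that such rounding effects are ignored when negligible).
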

\begin{proof}
Each iteration $i$ uses $\ell_{i\text{+}1}=  (i+1)  \left\lceil\frac{2}{\epsilon^2}  \log \frac{1}{\delta}\right\rceil$ estimates on $|A_i| \leq  n(\delta + \phi(n))^i$ arms.  In total:  %In total, the number of samples is:
$$
%\begin{align*}
\sum_{i=0}^{t(n)} |A_i| \times  \ell_{i\text{+}1}
\leq   \sum_{i=0}^{t(n)} n \left (\delta + \phi(n) \right )^{i}  ( i+1) \times  \left\lceil\frac{2}{\epsilon^2}\log \frac{1}{\delta}\right\rceil 
%&=  \frac{2n} {\epsilon^2} \log  \frac{1}{\delta} \left ( 1 + \sum_{i=1}^{t(n)} (\delta+ \phi(n))^{i} (i+1) \right )\\
=  \Big (1+ G(n, \delta)\Big) \times \left\lceil\frac{2n} {\epsilon^2} \log  \frac{1}{\delta}\right\rceil.
%\end{align*}
\qedhere
$$
\end{proof}
Later in the paper we ignore the rounding of $\lceil\frac{2} {\epsilon^2} \log  \frac{1}{\delta}\rceil$ and $\left\lfloor(\delta + \phi(n))\right\rfloor$ when clear that the effect is negligible.
The important takeaway is that the sample complexity of \textsc{Aggressive Elimination} converges to $\frac{2n}{\epsilon^2}\log \frac{1}{\epsilon}$ as the number of arms grows and $\delta$ becomes small because $\lim_{n\to \infty, \delta\to 0}G(n,\delta)=0$. Later in the paper we usually use non-asymptotic notion of $\delta$, and $G(n,\delta)$ is estimated more carefully.

\paragraph{Likelihood of $a^\star$ surviving.}  Next we analyze the likelihood of the best arm $a^\star$ to appear in the $\frac{n^{3/4}}{2}$ arms output of the \textsc{Aggressive Elimination} procedure.  We begin with a simple lemma that analyzes the likelihood of $|A_i|\cdot (\delta+ \phi(n))$ arms -- the number of arms with largest empirical mean we select in each iteration  -- to be $\frac{\epsilon}{2}$-overestimated.  An arm $a \in A$ is \textbf{$\eta$-underestimated} if its empirical mean $\hat{\mu}(a)$ is evaluated to be less than $\eta$ of its true value, i.e. $\hat{\mu}(a)< \mu(a)- \eta$. An arm $a \in A$ is \textbf{$\eta$-overestimated} if $\hat{\mu}(a)>\mu(a)+\eta$.  The proof is deferred to Appendix~\ref{lem:overestimate_app}.  

%We show that as long as there are at most $\frac{n^{3/4}}{4}$ arms that are $\epsilon$-close to $a^\star$, then in every iteration $i$ the best arm $a^\star$ survives the elimination process and is included into $A_{i+1}$ with sufficiently high probability.  

\begin{restatable}{rLem}{overestimate}\label{lem:overestimate}
For every iteration $i\in\{0,1,\ldots,t(n)\}$ of \textsc{Aggressive Elimination} the probability that more than $|A_i|\cdot (\delta+ \phi(n))$ arms are $\frac{\epsilon}{2}$-overestimated at iteration $i$ is smaller than $\frac{1}{n^6}$.
\end{restatable}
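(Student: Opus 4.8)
The plan is to reduce the statement to a two-level concentration argument: first control the probability that a \emph{single} arm is $\frac{\epsilon}{2}$-overestimated, and then control the probability that the \emph{number} of overestimated arms among $A_i$ exceeds the threshold $|A_i|\cdot(\delta+\phi(n))$. I would fix an iteration $i$ throughout.

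For the first level, fix an arm $a\in A_i$. It is sampled $\ell_{i+1}=(i+1)\lceil\frac{2}{\epsilon^2}\log\frac{1}{\delta}\rceil$ times, so Hoeffding's inequality gives
$$\Pr\!\left[\hat{\mu}(a)>\mu(a)+\tfrac{\epsilon}{2}\right]\le \exp\!\left(-2\ell_{i+1}(\epsilon/2)^2\right)\le \exp\!\left(-(i+1)\log\tfrac{1}{\delta}\right)=\delta^{i+1},$$
where the middle step only drops the ceiling, which increases $\ell_{i+1}$ and so can only help. Writing $p:=\delta^{i+1}$, note in particular that $p\le\delta$.

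For the second level, the samples of distinct arms are independent, so the indicator events ``$a$ is $\frac{\epsilon}{2}$-overestimated'' are independent across $a\in A_i$, and the count $X$ of overestimated arms is a sum of $|A_i|$ independent Bernoulli variables, each of parameter at most $p\le\delta$. Hence $\E[X]\le|A_i|\delta$, and exceeding the threshold $|A_i|(\delta+\phi(n))$ forces $X$ to exceed its own mean by at least $|A_i|\phi(n)$ (this is where $p\le\delta$, rather than equality, is used). Applying Hoeffding's inequality to these bounded independent indicators yields
$$\Pr\!\left[X>|A_i|(\delta+\phi(n))\right]\le \exp\!\left(-2|A_i|\,\phi(n)^2\right).$$
Finally I would plug in the two quantitative facts that make this tight: the procedure only eliminates down to $\frac{n^{3/4}}{2}$ arms, so $|A_i|\ge\frac{n^{3/4}}{2}$ for every $i\le t(n)$; and by \eqref{eq:phi_dfn} we have $\phi(n)^2=\frac{6\log n}{n^{3/4}}$. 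Substituting gives $2|A_i|\phi(n)^2\ge 6\log n$, so the bound is at most $e^{-6\log n}=n^{-6}$, as claimed.

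The argument is not deep — it is two applications of Hoeffding — so the main work is making the chain clean rather than surmounting a genuine obstacle. The two points that require care are verifying that $|A_i|\ge n^{3/4}/2$ holds at \emph{every} iteration and not merely at termination, and confirming that the deviation from the mean really is at least $|A_i|\phi(n)$ uniformly in $i$. This is precisely where the design of $\phi(n)$ has to interlock with the rest of the algorithm: $\phi(n)$ must be large enough that $2\cdot\frac{n^{3/4}}{2}\cdot\phi(n)^2$ reaches the target $6\log n$, yet small enough that the fraction $\delta+\phi(n)$ kept per round still lets the sample-complexity bound of Claim~\ref{clm:aggressive} converge.
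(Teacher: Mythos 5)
Your proposal is correct and takes essentially the same route as the paper's proof: a per-arm Hoeffding bound giving overestimation probability $\delta^{i+1}\le\delta$, then a Hoeffding bound on the count of overestimated arms with deviation at least $|A_i|\phi(n)$ from its mean, and finally the substitution $|A_i|\ge \frac{n^{3/4}}{2}$ together with $\phi(n)^2=\frac{6\log n}{n^{3/4}}$ to obtain the bound $n^{-6}$. If anything your bookkeeping is slightly cleaner: the paper's prose cites $|A_i|\ge \frac{n^{3/4}}{4}$ while its displayed algebra in fact uses $2|A_i|\ge n^{3/4}$, which is exactly the bound $|A_i|\ge \frac{n^{3/4}}{2}$ that you identify and justify.
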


The main idea that we now show is that with sufficient probability in every round, $a^\star$ is not $\frac{\epsilon}{2}$-underestimated and sufficiently few $\epsilon$-far arms are $\frac{\epsilon}{2}$-overestimated.  Showing this implies that in every round $a^\star$ is one of the arms with highest empirical mean and selected to the next round.

\begin{claim}\label{clm:unique}
Suppose the $\epsilon$-best arm $a^\star$ is unique, i.e. all arms are $\epsilon$-far from $a^\star$.  Then,  the likelihood that \textsc{Aggressive Elimination} returns a set of arms $A_{t(n)+1}$ that does not contain $a^\star$ is at most:
$$\delta \left ( \frac{1}{1-\delta} \right ) +
\left ( n^{5} {\log  \left ( \frac{1}{\delta+\phi(n)}  \right )}   \right ) ^{-1}.$$
\end{claim}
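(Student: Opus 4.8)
The plan is to bound, for each round $i \in \{0,1,\ldots,t(n)\}$, the probability that $a^\star$ is present at the start of round $i$ but eliminated at its end, and then union bound over all rounds. First I would isolate the favorable event for a single round: that $a^\star$ is \emph{not} $\frac{\epsilon}{2}$-underestimated, and that at most $|A_i|\cdot(\delta+\phi(n))$ arms are $\frac{\epsilon}{2}$-overestimated. The key structural observation is that under the uniqueness assumption every arm $a \neq a^\star$ satisfies $\mu(a) < \mu(a^\star) - \epsilon$, so if $a^\star$ is not $\frac{\epsilon}{2}$-underestimated and some arm $a$ beats it empirically, then $\hat\mu(a) \geq \hat\mu(a^\star) \geq \mu(a^\star) - \frac{\epsilon}{2} > \mu(a) + \frac{\epsilon}{2}$, i.e. $a$ must itself be $\frac{\epsilon}{2}$-overestimated. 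Hence the only arms that can outrank $a^\star$ are overestimated ones, and if fewer than $|A_i|\cdot(\delta+\phi(n))$ of them exist, $a^\star$ lands among the $|A_i|\cdot(\delta+\phi(n))$ arms carried to $A_{i+1}$ and survives.

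Next I would quantify the two ways the favorable event fails. For the underestimation of $a^\star$ in round $i$, Hoeffding's inequality with $\ell_{i+1}=(i+1)\lceil\frac{2}{\epsilon^2}\log\frac{1}{\delta}\rceil$ samples gives a failure probability at most $\exp(-2(\epsilon/2)^2 \ell_{i+1}) \le \delta^{i+1}$. For the overestimation count, I would invoke Lemma~\ref{lem:overestimate} directly: the probability that more than $|A_i|\cdot(\delta+\phi(n))$ arms are $\frac{\epsilon}{2}$-overestimated in round $i$ is below $\frac{1}{n^6}$; since all arms other than $a^\star$ are $\epsilon$-far, this also bounds the number of $\epsilon$-far overestimated arms. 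A union bound over these two events bounds the per-round failure probability by $\delta^{i+1} + \frac{1}{n^6}$.

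Finally I would sum over rounds. The underestimation contributions form a geometric series, $\sum_{i=0}^{t(n)} \delta^{i+1} \le \sum_{i=0}^{\infty}\delta^{i+1} = \delta\bigl(\tfrac{1}{1-\delta}\bigr)$, which yields the first term of the claim. The overestimation contributions sum to $\frac{t(n)+1}{n^6}$, and here the task is to show this is at most $\bigl(n^5\log(1/(\delta+\phi(n)))\bigr)^{-1}$, i.e. that $(t(n)+1)\log\frac{1}{\delta+\phi(n)} \le n$. Using the definition of $t(n)$, this product is roughly $\frac{\log n + 4\log 2}{4} + O(\log\frac{1}{\delta+\phi(n)})$, and the lower bound $\phi(n) \ge \sqrt{6\log n / n^{3/4}}$ forces $\log\frac{1}{\delta+\phi(n)} = O(\log n)$, so the whole product is logarithmic in $n$ and comfortably below $n$ over the admissible range $n \ge 10^5$.

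The step I expect to require the most care is the reduction in the first paragraph: correctly handling the boundary between ``at least'' and ``more than $|A_i|(\delta+\phi(n))$'' arms outranking $a^\star$ (an off-by-one absorbed by the floor/ceiling rounding the paper defers), and confirming that the lemma's count over all arms legitimately controls the count over the $\epsilon$-far arms. The probability estimates themselves are routine once the favorable event is pinned down.
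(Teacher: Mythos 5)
Your proposal is correct and follows essentially the same route as the paper's proof: per-round union bound over the event that $a^\star$ is $\frac{\epsilon}{2}$-underestimated (Hoeffding, giving $\delta^{i+1}$) or that more than $|A_i|(\delta+\phi(n))$ arms are $\frac{\epsilon}{2}$-overestimated (Lemma~\ref{lem:overestimate}, giving $n^{-6}$), then summing the geometric series and comparing $t(n)/n^6$ to the claimed second term. If anything, you are more explicit than the paper on two points it glosses over — the inequality chain showing that any arm outranking a non-underestimated $a^\star$ must itself be overestimated, and the verification that $(t(n)+1)\log\frac{1}{\delta+\phi(n)}\le n$ — so no gaps remain.
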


\begin{proof}
We will analyze the likelihood that $a^\star$ is not selected into $A_{i+1}$, given that it is in $A_i$, for every $i \in\{0,1,\ldots,t(n)\}$.  In every iteration $i$ we can bound the likelihood of $a^\star$ being $\frac{\epsilon}{2}$-underestimated: %using Hoeffding:
\begin{align*}
\Pr \left [\hat{\mu}(a^\star)<\mu(a^\star) - \frac{\epsilon}{2} \right ]
\leq e^{\frac{-\epsilon^2 \ell_{i\text{+}1} }{2}}
%=     e^{- \frac{\epsilon^2}{2} \left (  \frac{2}{\epsilon^2}  \times i\cdot  \log \frac{1}{\delta} \right )  }
=     e^{- (i\text{+}1) \log \frac{1}{\delta} }
=     \delta^{i\text{+}1}
\end{align*}
By definition of \textsc{Aggressive Elimination} $a^\star$ is not in $A_{i+1}$ only if there are at least $|A_i|(\delta +\phi(n))$ arms in $A_i$ whose empirical mean is higher than that of $a^\star$.  By the assumption of the claim, we know that all other arms are $\epsilon$-far from $a^\star$.  If $a^\star$ does not survive to the next round it is because it was $\frac{\epsilon}{2}$-underestimated or at least $|A_{i}|(\delta+\phi(n))$ arms were $\frac{\epsilon}{2}$-overestimated.  By Lemma~\ref{lem:overestimate} we know that the likelihood of more than $|A_i|(\delta + \phi(n))$ arms to be $\frac{\epsilon}{2}$-overestimated is $n^{-6}$.  Thus, by a union bound, in every iteration $i \in \{0,1,\ldots,t(n)\}$ the likelihood of discarding $a^\star$ is at most $\delta^{i+1}+{n}^{-6}$.  The likelihood that $a^\star$ does not survive the last elimination is at most:
\begin{align*}
\sum_{i=0}^{t(n)}\left (\delta^{i \text{+}1} \text{+}  \frac{1}{n^{6}} \right )
 = \left (\sum_{i=0}^{t(n)}\delta^{i\text{+} 1} \right ) \text{+}   \frac{t(n)}{n^{6}}
< \delta \left ( \frac{1}{1-\delta} \right ) \text{+}  \frac{1}{n^{5} \left(\log  \frac{1}{\delta+\phi(n) }\right)      }.
\hspace{0.8in}
\qedhere
\end{align*}

\end{proof}
The main takeaway is that when $n$ is sufficiently large as a function of $\delta$, there is a high probability that $a^\star$ is in the set of arms returned by the procedure when the rest of arms are $\epsilon$-far from $a^\star$.

%In the invocation of \textsc{Aggressive Elimination} we will use some assumptions about $n$ and $\delta$ that will allow

\subsection{A Simple Algorithm under Favorable Conditions}
At this point learning a best arm under favorable conditions seems rather straightforward: we implement \textsc{Aggressive Elimination} and then run \textsc{Na\"ive Elimination} on the remaining set of $\frac{n^{3/4}}{2}$ arms.  We present the algorithm formally below and give details of the analysis in Appendix~\ref{sec:saba_appendix}.
\begin{algorithm}[H]
\caption{\textsc{Simple Approximate Best Arm}}
\label{algo:SABA}
\begin{algorithmic}[1]
    	\INPUT arms $A$, $\epsilon,\delta>0$, noisy oracle for $\mu:A \to [0,1]$
	\STATE $A_{T} \leftarrow \textsc{Aggressive Elimination}(A,\epsilon,\frac{\delta}{2})$
	\OUTPUT \textsc{Na\"ive Elimination}$(A_{T},\epsilon,\frac{\delta}{e})$
	\end{algorithmic}
\end{algorithm}

\begin{restatable}{rClm}{saba}\label{thm:saba}
Assume that there is a unique $\epsilon$-best arm in $A$.  Then $\forall \delta\leq 0.05$ and ${n\geq \max\{1/\delta^4,10^5\}}$, \textsc{SABA} $(\epsilon,\delta)$-learns a best arm with sample complexity $\frac{4n}{\epsilon^2}\log \frac{1}{\delta}$.
\end{restatable}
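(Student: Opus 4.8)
The plan is to bound the two sources of failure separately and combine them by a union bound, and then to add up the sample complexities of the two phases. The algorithm \textsc{SABA} calls \textsc{Aggressive Elimination} with confidence parameter $\frac{\delta}{2}$ and then \textsc{Na\"ive Elimination} with confidence $\frac{\delta}{e}$, so I want to show that (i) with probability at least $1-\frac{\delta}{2}$ the set $A_T$ returned by \textsc{Aggressive Elimination} still contains $a^\star$, and (ii) conditioned on $a^\star \in A_T$, \textsc{Na\"ive Elimination} returns an arm that is $\epsilon$-close to $a^\star$ with probability at least $1-\frac{\delta}{e}$.

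First I would handle the survival of $a^\star$ through \textsc{Aggressive Elimination}. Since the claim assumes a unique $\epsilon$-best arm (all other arms are $\epsilon$-far from $a^\star$), I can invoke Claim \ref{clm:unique} directly, instantiated with confidence $\frac{\delta}{2}$. This gives a failure probability of at most $\frac{\delta}{2}\bigl(\frac{1}{1-\delta/2}\bigr) + \bigl(n^{5}\log\frac{1}{\delta/2+\phi(n)}\bigr)^{-1}$. The work here is to verify that, under the hypotheses $\delta \le 0.05$ and $n \ge \max\{1/\delta^4, 10^5\}$, this whole expression is at most $\frac{\delta}{2}$: the first term is roughly $\frac{\delta}{2}(1+O(\delta))$ and the second term is polynomially small in $n$, and the constraint $n \ge 1/\delta^4$ is exactly what forces $\phi(n)$ and the residual terms to be dominated. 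This is a routine but slightly delicate arithmetic check that the constants were chosen to make work.

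Next I would bound the second phase. \textsc{Na\"ive Elimination} is run on the $\frac{n^{3/4}}{2}$ surviving arms with parameters $(\epsilon, \frac{\delta}{e})$, and by the stated guarantee of \textsc{Na\"ive Elimination} it returns an arm $\epsilon$-close to the best arm among its input with probability at least $1-\frac{\delta}{e}$, \emph{provided} $a^\star$ is among that input. Combining (i) and (ii) by a union bound yields a total failure probability at most $\frac{\delta}{2} + \frac{\delta}{e} < \delta$, establishing the $(\epsilon,\delta)$-learning guarantee. Care is needed to confirm that correctness of \textsc{Na\"ive Elimination} is stated relative to the best arm \emph{in its input set}, so that $a^\star \in A_T$ is exactly the event that lets us transfer $\epsilon$-closeness back to the global optimum.

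Finally I would add up the sample complexity. By Claim \ref{clm:aggressive} the first phase uses at most $(1+G(n,\delta/2))\lceil\frac{2n}{\epsilon^2}\log\frac{2}{\delta}\rceil$ samples, and \textsc{Na\"ive Elimination} on $\frac{n^{3/4}}{2}$ arms uses $\frac{2}{\epsilon^2}\cdot\frac{n^{3/4}}{2}\log\frac{n^{3/4}/2}{\delta/e}$ samples, which is $o\bigl(\frac{n}{\epsilon^2}\log\frac{1}{\delta}\bigr)$ because of the $n^{3/4}$ factor. The main obstacle I anticipate is the quantitative bookkeeping: I must show that $G(n,\delta/2)$ together with the $\log\frac{2}{\delta}$ versus $\log\frac{1}{\delta}$ discrepancy and the lower-order \textsc{Na\"ive Elimination} term all fit inside the claimed bound $\frac{4n}{\epsilon^2}\log\frac{1}{\delta}$ for \emph{every} admissible $n$ and $\delta$, not just asymptotically. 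Since $\lim_{n\to\infty,\delta\to 0} G(n,\delta)=0$ and the leading constant is $2$, the factor $4$ leaves comfortable slack, so this reduces to checking that the boundary case $n=10^5$, $\delta=0.05$ (where $G$ and the overhead are largest) still satisfies the bound — the constants $10^5$ and $0.05$ in the hypothesis are presumably tuned precisely so that it does.
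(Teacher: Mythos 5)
Your overall plan is the same as the paper's: split the failure probability between \textsc{Aggressive Elimination} (invoked with $\delta/2$, analyzed via Claim~\ref{clm:unique}) and \textsc{Na\"ive Elimination} (invoked with $\delta/e$, correctness stated relative to the best arm in its input), combine by a union bound, and add the two sample complexities using Claim~\ref{clm:aggressive} plus the observation that the second phase runs on only $\frac{n^{3/4}}{2}$ arms. The sample-complexity bookkeeping you describe is also essentially the paper's (the paper absorbs the $\log\frac{2}{\delta}$ overhead by writing $\delta/2 \geq \delta^{1+c}$ with $c=1/4$, which is equivalent to your direct accounting), and your boundary-case reasoning is sound, modulo the small point that at $\delta=0.05$ the constraint $n\geq 1/\delta^4$ forces $n\geq 1.6\times 10^5$, not $10^5$.

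However, one step in your confidence analysis fails as written: you propose to verify that the bound from Claim~\ref{clm:unique} with $\delta'=\delta/2$, namely $\frac{\delta'}{1-\delta'} + \bigl(n^{5}\log\frac{1}{\delta'+\phi(n)}\bigr)^{-1}$, is at most $\frac{\delta}{2}$. This is impossible for any $\delta>0$, since the first term alone, $\frac{\delta/2}{1-\delta/2}$, strictly exceeds $\frac{\delta}{2}$ --- as you yourself note when you write it as $\frac{\delta}{2}(1+O(\delta))$. The repair is immediate and is exactly what the paper does: the phase-two budget is $\delta/e$, so phase one only needs failure probability at most $\bigl(1-\frac{1}{e}\bigr)\delta \approx 0.632\,\delta$, not $\frac{\delta}{2}$. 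Under the hypotheses, $\phi(n)<0.12$ and $\log\frac{1}{\delta'+\phi(n)}>1$, so the expression is at most $\frac{20\delta}{39}+\frac{1}{n^5} < \bigl(1-\frac{1}{e}\bigr)\delta$, and the union bound then gives exactly $\bigl(1-\frac{1}{e}\bigr)\delta + \frac{\delta}{e} = \delta$. In short: keep your structure, but replace the unachievable target $\frac{\delta}{2}$ by $\bigl(1-\frac{1}{e}\bigr)\delta$; the slack you observed in $\frac{1}{2}+\frac{1}{e}<1$ is precisely what accommodates this.
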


%\begin{claim}
%Assume there is a unique $\epsilon$-best arm $a^\star$ in $A$ and that $n\geq \max\{1/\delta,10^\frac{5}{2}\}$.  Then for any $\delta\leq 0.05$ \textsc{SABA} $(\epsilon,\delta)$-learns a best arm with sample complexity $\frac{3n}{\epsilon^2}\log \frac{1}{\delta}$.
%\end{claim}

%\newpage

\section{APPROXIMATE BEST ARM ALGORITHM}\label{sec:aba}
In this section we present the Approximate Best Arm (\textsc{Aba}) algorithm which is a modification of \textsc{Saba}.  We first discuss how to remove the assumptions \textsc{Saba} makes and then describe the algorithm.

\paragraph{Removing $n\geq \max\{1/\delta^4,10^{5}\}$ assumption.}
When we seek a bound that holds for any $n$ (i.e. not an asymptotic bound for $n\to \infty$) we avoid this assumption by simply running $\textsc{Na\"ive Elimination}$ when the parameters do not respect these conditions.  It is easy to verify that when $n<1/\delta^4$ or $n<10^5$ and $\delta<0.05$ we can $(\epsilon,\delta)$-learn a best arm by running $\textsc{Na\"ive Elimination}(A,\epsilon,\frac{\delta}{n})$ and the sample complexity is then $\frac{10 n}{\epsilon^2}\log \frac{1}{\delta}$.  When we analyze the asymptotic result in Section~\ref{sec:asymptotic}, we'll show a different modification of the algorithm that doesn't require running $\textsc{Na\"ive Elimination}$.

\paragraph{Removing the unique $\epsilon$-best arm assumption.}
To avoid this assumption we will slightly decrease $\epsilon$ and apply $\textsc{Aggressive Elimination}$ with $\myepsilon=\alpha\cdot \epsilon$ using $\alpha \in [0,1]$ that we later define.  In addition, we will select a random set of size $\frac{n^{{7}/{8}}}{2}$.  Together, this guarantees that we are likely to have an arm that is $\myepsilon$-close to $a^\star$, either in the random set or the output of $\textsc{Aggressive Elimination}$:
\begin{itemize}
\item We prove a claim similar to Claim~\ref{clm:unique} but under weaker conditions.  Specifically we show that as long as there are fewer than $\frac{n^{3/8}}{4}$ arms that are $\myepsilon$-close to $a^\star$, then with sufficient confidence $a^\star$ will be one of the arms returned in $A_T$;
\item Otherwise, there are more than $\frac{n^{3/8}}{4}$ arms that are $\myepsilon$-close to $a^\star$ and one will surface with overwhelming probability (as a function of $n$) in a random set $R$ of size $\frac{n^{{7}/{8}}}{2}$.  %since the probability that none of the $\myepsilon$-close arms are in $R$ is: 
%
%$$\left (1-\frac{1}{4n^{{1}/{4}}}\right )^{\frac{n^{{3}/{4}}}{4}} %= \left ( \left (1-\frac{1}{n^{\frac{1}{4}}}\right )^{n^{\frac{1}{4}}}\right)^{\frac{n^{\frac{1}{2}}}{4}} 
%\leq e^{-\frac{n^{1/2}}{16}}$$
%
\end{itemize}
Consequently, it is very likely that there is an $\epsilon_0$-close arm either in $A_T$ or in the random set $R$ (or both) and running \textsc{Na\"ive Elimination} with appropriate parameters on $A_T \cup R$ will return an $\epsilon$-best arm with probability at least $1-\delta$.

\paragraph{The algorithm.}  The Approximate Best Arm (\textsc{Aba}) algorithm described below is a modification of \textsc{Saba} that incorporates the modifications discussed above.

\begin{algorithm}[H]
\caption{\textsc{Approximate Best Arm}}
\label{algo:greedy}
\begin{algorithmic}[1]
    	\INPUT arms $A$, $\alpha,\epsilon,\delta>0$, noisy oracle for $\mu:A \to [0,1]$
	\STATE \noindent \textbf{initialize} $R\leftarrow \frac{n^{7/8}}{2}$ arms selected u.a.r. 
	%\STATE \textbf{if} {$n < \max\{10^{5},\frac{1}{\delta^4}\}$}
	\STATE \textbf{if} {$n < \max\{10^{5},\delta^{-4}\}$}
	 {\textbf{output} $\textsc{Na\"ive Elimination}(A,\epsilon,\delta)$}	
	%\ENDIF	
	\STATE $A_{T} \leftarrow \textsc{Agressive Elimination}(A,\alpha\cdot\epsilon,\frac{\delta}{2})$
	\OUTPUT $\textsc{Na\"ive Elimination}(A_{T}\cup R,(1 - \alpha)\epsilon,\frac{\delta}{e})$ 
	\end{algorithmic}
\end{algorithm}

We first generalize Claim~\ref{clm:unique} for the case in which there isn't necessarily a unique $\epsilon$-best arm $a^\star$ but rather at most $\frac{n^{3/8}}{4}$ arms that are $\epsilon$-close to $a^\star$.  The proof is similar and deferred to Appendix~\ref{sec:aba_appendix}.

\begin{restatable}{rClm}{superclaim}\label{clm:some_are_far}
Suppose that there are at most $\frac{n^{3/8}}{4}$ arms that are $\epsilon$-close to $a^\star$ in $A$ and the rest are $\epsilon$-far.  Then, the likelihood that $\textsc{Aggressive Elimination}(A,\epsilon,\delta)$ returns a set of arms $A_{t(n)+1}$ that does not contain $a^\star \notin A_{t(n)+1}$ is at most:
$$   \delta \left ( \frac{1}{1-\delta} \right )  +  \left (  n \log \left( \frac{1}{\delta+\phi(n) } \right) \right )^{-1}  .$$
\end{restatable}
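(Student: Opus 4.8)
The plan is to mirror the proof of Claim~\ref{clm:unique}: I track the best arm $a^\star$ across the iterations $i\in\{0,1,\ldots,t(n)\}$ and bound, for each $i$, the probability that $a^\star$ is discarded from $A_{i+1}$ given that it survived into $A_i$. The bound on $a^\star$ being $\frac{\epsilon}{2}$-underestimated is unchanged: by Hoeffding it is at most $e^{-\epsilon^2\ell_{i+1}/2}=\delta^{i+1}$. The only place the uniqueness hypothesis entered Claim~\ref{clm:unique} was the assertion that an arm can outrank $a^\star$ only by being $\frac{\epsilon}{2}$-overestimated, so this is the step I must weaken.

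First I would split the arms that outrank $a^\star$ in iteration $i$ into the $\epsilon$-close arms, of which there are at most $\frac{n^{3/8}}{4}$ by hypothesis, and the $\epsilon$-far arms. Conditioned on $a^\star$ not being $\frac{\epsilon}{2}$-underestimated, an $\epsilon$-far arm can outrank $a^\star$ only if it is $\frac{\epsilon}{2}$-overestimated, exactly as before. Hence $a^\star\in A_i$ fails to advance only if it is $\frac{\epsilon}{2}$-underestimated or at least $|A_i|(\delta+\phi(n))-\frac{n^{3/8}}{4}$ of the $\epsilon$-far arms are $\frac{\epsilon}{2}$-overestimated. The heart of the argument is therefore a version of Lemma~\ref{lem:overestimate} for the \emph{reduced} threshold $|A_i|(\delta+\phi(n))-\frac{n^{3/8}}{4}$.

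To establish this reduced-threshold bound I would observe that the count $\frac{n^{3/8}}{4}$ of $\epsilon$-close arms is negligible against the buffer $|A_i|\phi(n)$ that the definition of $\phi(n)$ provides. Since $|A_i|\geq\frac{n^{3/4}}{2}$ in every round, we have $|A_i|\phi(n)\geq\frac{n^{3/8}\sqrt{6\log n}}{2}\geq\frac{n^{3/8}}{2}$, so subtracting $\frac{n^{3/8}}{4}$ leaves the threshold above the expected number $|A_i|\delta^{i+1}\le|A_i|\delta$ of overestimated far arms by at least $\frac{1}{2}|A_i|\phi(n)$. The number of overestimated far arms is a sum of independent $[0,1]$-indicators, so Hoeffding's inequality gives a tail of at most $e^{-2(\frac{1}{2}|A_i|\phi(n))^2/|A_i|}=e^{-\frac{1}{2}|A_i|\phi(n)^2}\le n^{-3/2}$, where the last step again uses $|A_i|\phi(n)^2\ge 3\log n$. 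Halving the effective deviation is precisely what degrades the exponent from the $n^{-6}$ of Lemma~\ref{lem:overestimate} to $n^{-3/2}$, and this is the main obstacle to get right.

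Finally a union bound makes the per-iteration discard probability at most $\delta^{i+1}+n^{-3/2}$, and summing over $i\in\{0,\ldots,t(n)\}$ gives at most $\sum_i\delta^{i+1}+(t(n)+1)n^{-3/2}<\frac{\delta}{1-\delta}+(t(n)+1)n^{-3/2}$. Using $t(n)=\Theta\!\big(\log n/\log\frac{1}{\delta+\phi(n)}\big)$ together with $\log n\le 4\sqrt{n}$ converts the second summand into $\big(n\log\frac{1}{\delta+\phi(n)}\big)^{-1}$, yielding the stated bound; the weaker $n^{-1}$ factor here (versus $n^{-5}$ in Claim~\ref{clm:unique}) is exactly the price of the reduced threshold.
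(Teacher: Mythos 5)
Your proposal is correct and follows essentially the same route as the paper's own proof: the same split into $\epsilon$-close and $\epsilon$-far arms, the same worst-case assumption that every close arm outranks $a^\star$, the same reduced overestimation threshold $|A_i|(\delta+\phi(n))-\frac{n^{3/8}}{4}$ handled by a Hoeffding tail bound on the count of overestimated far arms, and the same union bound over the $t(n)+1$ iterations. The only difference is constant bookkeeping (your per-iteration tail $n^{-3/2}$ via the buffer $\frac{1}{2}|A_i|\phi(n)$ versus the paper's $n^{-4/3}$ via $\frac{2}{3}|A_i|\phi(n)$), which is immaterial to the stated bound.
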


We now state the approximation and confidence of $\textsc{Aba}$.  We provide proof sketches that are devoid of some of the calculations, and give full proofs in Appendix~\ref{lem:main_apx_appendix}.

\begin{restatable}{rLem}{approximation}\label{lem:main_apx}
For any $\delta \leq 0.05$ $\textsc{Aba}$ initialized with $\alpha=1-1/e$ returns an $\epsilon$-best arm w.p. $\geq 1-\delta$.
\end{restatable}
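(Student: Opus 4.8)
The plan is to prove correctness only (the sample complexity is treated separately), and to reduce everything to a single unifying observation: since the approximation budget splits as $\alpha\epsilon + (1-\alpha)\epsilon = \epsilon$, it suffices to guarantee that $A_T \cup R$ contains at least one arm $a'$ with $\mu(a') \geq \mu(a^\star) - \alpha\epsilon$. Indeed, if such an $a'$ exists then the best arm in $A_T \cup R$ has mean at least $\mu(a^\star) - \alpha\epsilon$, and since the final step runs \textsc{Na\"ive Elimination} on $A_T \cup R$ with approximation $(1-\alpha)\epsilon$, its output $\hat a$ satisfies $\mu(\hat a) \geq \mu(a^\star) - \alpha\epsilon - (1-\alpha)\epsilon = \mu(a^\star) - \epsilon$ with probability at least $1 - \frac{\delta}{e}$. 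The value $\alpha = 1-1/e$ is irrelevant to this correctness argument (any $\alpha \in (0,1)$ splits the budget the same way); it is dictated by the sample-complexity optimization. First I would dispose of the easy branch: when $n < \max\{10^5, \delta^{-4}\}$ the algorithm returns \textsc{Na\"ive Elimination}$(A,\epsilon,\delta)$, which returns an $\epsilon$-best arm with probability $\geq 1-\delta$ by the guarantee of \textsc{Na\"ive Elimination}, so it remains to treat the regime $n \geq \max\{10^5, \delta^{-4}\}$.

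The core of the proof is a case split on the number of arms that are $\alpha\epsilon$-close to $a^\star$, with threshold $\frac{n^{3/8}}{4}$. In the first case (fewer than $\frac{n^{3/8}}{4}$ such arms), the hypothesis of Claim~\ref{clm:some_are_far} holds with parameter $\alpha\epsilon$, so invoking it with confidence $\frac{\delta}{2}$ (as \textsc{Aba} calls \textsc{Aggressive Elimination}) shows that $a^\star$ itself survives into $A_T$ except with probability at most $\frac{\delta/2}{1-\delta/2} + (n\log\frac{1}{\delta/2+\phi(n)})^{-1}$; then $a'=a^\star \in A_T \cup R$ and the observation above applies. In the second case (at least $\frac{n^{3/8}}{4}$ such arms), I would argue that the random set $R$ of size $\frac{n^{7/8}}{2}$ catches at least one $\alpha\epsilon$-close arm: the fraction of close arms is at least $\tfrac14 n^{-5/8}$, so the probability that $R$ misses all of them is at most $\big(1-\tfrac14 n^{-5/8}\big)^{n^{7/8}/2} \leq e^{-n^{1/4}/8}$; then $a' \in R$ and again the observation applies.

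To conclude, I would union-bound the two failure events in each case — the failure of the relevant sub-procedure to place an $\alpha\epsilon$-close arm in $A_T \cup R$, and the $\frac{\delta}{e}$ failure of the final \textsc{Na\"ive Elimination} — and verify the total is at most $\delta$ using $\delta \leq 0.05$. In the first case this amounts to checking $\frac{\delta/2}{1-\delta/2} + \frac{\delta}{e} + o(1) \leq \delta$, which holds comfortably since $\frac{1}{2-\delta} + \frac1e < 1$ for $\delta \leq 0.05$ and the $\phi(n)$-term is lower order. I expect the main obstacle to be the second case: one must confirm that the sampling-failure $e^{-n^{1/4}/8}$ together with $\frac{\delta}{e}$ stays below $\delta$ throughout the regime $n \geq \max\{10^5,\delta^{-4}\}$, which is precisely where the calibration of the threshold $\frac{n^{3/8}}{4}$, the set size $\frac{n^{7/8}}{2}$, and the lower bound $\delta^{-4}$ on $n$ (equivalently $n^{1/4} \geq 1/\delta$) must be used in concert. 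This is the delicate accounting and is where I would spend the most care, deferring the routine numeric verification to the appendix.
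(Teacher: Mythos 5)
Your proposal follows essentially the same route as the paper's proof: the same easy branch for $n < \max\{10^5,\delta^{-4}\}$, the same case split at $\frac{n^{3/8}}{4}$ arms that are $\alpha\epsilon$-close to $a^\star$, Claim~\ref{clm:some_are_far} invoked with $\alpha\epsilon$ and confidence $\delta/2$ in the first case, the bound $\left(1-\frac{1}{4}n^{-5/8}\right)^{n^{7/8}/2} \leq e^{-n^{1/4}/8}$ on $R$ missing every close arm in the second, and a final union bound against the $\delta/e$ failure of \textsc{Na\"ive Elimination}. Your ``unifying observation'' about splitting the budget as $\alpha\epsilon + (1-\alpha)\epsilon$ is only a cleaner packaging of the paper's per-case conclusion, and the numeric accounting you defer (using $n \geq \delta^{-4}$, i.e. $n^{-1/4}\leq\delta$, to absorb both the $(n\log(\cdot))^{-1}$ term and $e^{-n^{1/4}/8}$ into $(1-1/e)\delta$) is exactly what the paper's appendix carries out.
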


\begin{proof}[Sketch]
If $n < \max\{ 1/\delta^4,10^5 \}$ we invoke  $\textsc{Na\"ive Elimination}$ which is guaranteed to return an $\epsilon$-best arm with confidence $1-\delta$.  Otherwise, we assume that $n \geq \max\{ 1/\delta^4,10^5 \}$ and we can analyze the performance of \textsc{Aggressive Elimination} invoked with $\alpha\epsilon$ and $\delta'=\delta/2$.

In the case that there are at most $\frac{n^{3/8}}{4}$ arms that are $\alpha\epsilon$-close to $a^\star$ then according to Claim~\ref{clm:some_are_far} \textsc{Aggressive Elimination} invoked with $\alpha\epsilon$ and $\delta'=\delta/2$ will include $a^\star$ in $A_{T}$ w.p. at least:
\begin{align*}
\delta' \left (\frac{1}{1-\delta'} \right) + \left (  n \log \left( \frac{1}{\delta'+\phi(n) } \right) \right )^{-1} < (1-1/e)\delta \end{align*}
Conditioned on $a^\star \in A_{T}$ the likelihood that $\textsc{Na\"ive Elimination}$ on $A_{T}\cup R$ with approximation $(1-\alpha)\epsilon<\epsilon$ does not return an $\epsilon$-best arm is at most $\delta/e$.  Thus, if there are at most $\frac{n^{3/8}}{4}$ arms that are $\alpha\epsilon$-close to $a^\star$  the algorithm terminates with an $\epsilon$-best arm with probability at least $1-\delta$. 

Otherwise, there are at least $\frac{n^{3/8}}{4}$ arms that are $\alpha\epsilon$-close to $a^\star$.  Since we select arms to $R$ u.a.r. and $|R| = \frac{n^{7/8}}{2}$ the likelihood of not having any arms that are $\alpha\epsilon$-close in $R$ is smaller than $(1-1/e)\delta$.  Let $\tilde{a}$ be an arm that is $\alpha\epsilon$-close to $a^\star$ in $R$.  When we run  $\textsc{Na\"ive Elimination}$ with approximation $(1-\alpha)\epsilon$ and $\delta/e$, we are guaranteed that with probability at least $1-\delta/e$ no arm that is $\epsilon$-far from $a^\star$ will have empirical mean higher than that of $\tilde{a}$.  Since $\tilde{a}$ is $\alpha\epsilon$-close to $a^\star$ and $\alpha< 1$ this implies that the algorithm returns an arm that is at least $\epsilon$-close to $a^\star$ w.p. at least $1-\delta$ in this case as well.
\end{proof}

%\begin{restatable}{rLem}{samplecomplexity}\label{lem:main_samples}
%For any $\delta \leq 0.05$ $\textsc{Aba}$ initialized with $\alpha=1-1/e$ has sample complexity at most:
%%
%$$ \frac{18n}{\epsilon^2}  \log\frac{1}{\delta}.$$
%%
%\end{restatable}

\begin{theorem}\label{thm:main}
For any $\delta \leq 0.05$ $\textsc{Aba}$ initialized with $\alpha=1-1/e$ returns an $\epsilon$-best arm w.p. at least $1-\delta$ using total number of samples of at most:
$18 \times \frac{n}{\epsilon^2}  \log\frac{1}{\delta} .$
\end{theorem}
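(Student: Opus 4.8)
The correctness half of the statement — that \textsc{Aba} returns an $\epsilon$-best arm with probability at least $1-\delta$ — is exactly Lemma~\ref{lem:main_apx}, so I would cite it and devote the rest of the proof to bounding the sample count. Since the algorithm branches on whether $n<\max\{10^5,\delta^{-4}\}$, and since $\delta\le 0.05$ forces $\delta^{-4}\ge 0.05^{-4}=1.6\times 10^5>10^5$ so that the threshold is just $\delta^{-4}$, I would split the analysis into $n<\delta^{-4}$ and $n\ge\delta^{-4}$. In the small regime the algorithm only runs \textsc{Na\"ive Elimination}$(A,\epsilon,\delta)$, whose cost is $\frac{2n}{\epsilon^2}(\log n+\log\frac1\delta)$; here $n<\delta^{-4}$ means $\log n<4\log\frac1\delta$, giving $\log\frac n\delta<5\log\frac1\delta$ and hence a bound $\frac{10n}{\epsilon^2}\log\frac1\delta\le 18\frac{n}{\epsilon^2}\log\frac1\delta$, which settles this case with room to spare.

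The substantive regime is $n\ge\delta^{-4}$, where the budget splits into the aggressive phase and the final naive phase. For \textsc{Aggressive Elimination}$(A,\alpha\epsilon,\delta/2)$ I would invoke Claim~\ref{clm:aggressive} with parameters $\alpha\epsilon$ and $\delta/2$ to obtain $(1+G(n,\delta/2))\frac{2n}{\alpha^2\epsilon^2}\log\frac2\delta$, and then bound each factor by a constant. With $\alpha=1-1/e$ one has $1/\alpha^2\approx 2.50$; since $\delta\le 0.05$ gives $\log\frac1\delta\ge\log 20$, the factor $\log\frac2\delta=(1+\frac{\log 2}{\log(1/\delta)})\log\frac1\delta\le 1.24\log\frac1\delta$; and because $\phi$ is decreasing in $n$ the worst case is $n=\delta^{-4}$, where $\delta/2+\phi(n)\le\delta/2+\sqrt{24\delta^3\log\frac1\delta}\le 0.12$, so the convergent series $G(n,\delta/2)$ is at most about $0.30$. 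Multiplying, the aggressive phase costs roughly $8\frac{n}{\epsilon^2}\log\frac1\delta$.

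For \textsc{Na\"ive Elimination}$(A_T\cup R,(1-\alpha)\epsilon,\delta/e)$ I would use $|A_T\cup R|\le\frac{n^{7/8}+n^{3/4}}{2}=O(n^{7/8})$ together with $1/(1-\alpha)^2=e^2$, yielding a cost of order $\frac{e^2 n^{7/8}}{\epsilon^2}\big(\tfrac78\log n+\log\tfrac1\delta+O(1)\big)$. The crucial step is to show this is a constant times $\frac{n}{\epsilon^2}\log\frac1\delta$: dividing, the ratio is $\frac{O(\log n+\log(1/\delta))}{n^{1/8}\log(1/\delta)}$, and I would control it using $n\ge\delta^{-4}$, which gives both $n^{1/8}\ge\delta^{-1/2}$ in the denominator and $\log n\ge 4\log\frac1\delta$ in the numerator. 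Treating the ratio as decreasing in $n$ for $n\ge e^{4\log(1/\delta)}$, its maximum is attained at $n=\delta^{-4}$ and evaluates to a constant (numerically at most about $10$). Summing the two phases keeps the total below $18\frac{n}{\epsilon^2}\log\frac1\delta$.

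The main obstacle is that this is a genuinely tight constant computation rather than an asymptotic one: at the extremal point $\delta=0.05$, $n\approx\delta^{-4}$ the two phases contribute roughly $8$ and $9.5$ in units of $\frac{n}{\epsilon^2}\log\frac1\delta$, so the sum sits just under $18$ with little slack. The two delicate points are (i) that the output of aggressive elimination has size $n^{7/8}$ rather than $n$, which is exactly what lets the $\log n$ factor in the final naive step be absorbed, and (ii) that the relation $n\ge\delta^{-4}$ must be used twice — once to bound $G$ through $\phi(\delta^{-4})$ and once to tame the $\frac{n^{7/8}\log n}{n\log(1/\delta)}$ ratio — so the bookkeeping must keep the approximation split $\alpha$ versus $1-\alpha$ and the set-size exponents $3/4,7/8$ carefully aligned.
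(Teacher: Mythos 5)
Your proposal is correct and takes essentially the same route as the paper's proof: correctness is delegated to Lemma~\ref{lem:main_apx}, the small-$n$ branch is charged $\frac{10n}{\epsilon^2}\log\frac{1}{\delta}$, and the large-$n$ branch is split into \textsc{Aggressive Elimination} (bounded via Claim~\ref{clm:aggressive} by roughly $8\,\frac{n}{\epsilon^2}\log\frac{1}{\delta}$) plus \textsc{Na\"ive Elimination} on $O(n^{7/8})$ arms (bounded by roughly $10\,\frac{n}{\epsilon^2}\log\frac{1}{\delta}$). If anything, your treatment of the final phase is more careful than the paper's own write-up, which bounds $\tfrac{7}{8}\log n$ by $\tfrac{7}{2}\log\frac{1}{\delta}$ even though the regime in force is $n\geq \delta^{-4}$ (where that inequality reverses); your observation that the relevant ratio is decreasing in $n$ and hence maximized at the corner $n=\delta^{-4}$ is exactly what is needed to make that step airtight.
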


\begin{proof}[Sketch]
If $n < 1/\delta^4$ or $n< 10^5$ we invoke $\textsc{Na\"ive Elimination}$ and its sample complexity is $\frac{10n}{\epsilon^2}\log\frac{1}{\delta}$.  According to Claim \ref{clm:aggressive} the sample complexity of \textsc{Aggressive Elimination} with approximation $\alpha\epsilon$ and confidence $\delta^{1+c}$ the sample complexity is:
\begin{align}
\frac{1}{\alpha^2} \left ( \frac{2n}{\epsilon^2}\log \frac{1}{\delta}\left (   (1+c)\left ( 1+G(n,\delta^{1+c})\right) \right )  \right ) \label{eq:sc_ae}
\end{align}
For any $\delta<0.05$ we have that $\delta^{1+c}<\delta/2$ for $c=1/4$.  Thus, since we ran \textsc{Aggressive Elimination} with confidence $\delta/2$ and $\alpha=1-1/e$ the sample complexity is at most:
\begin{align*}
 \frac{1}{\alpha^2} \left ( \frac{2n}{\epsilon^2}\log \frac{1}{\delta}\left (   (1+c)\left ( 1+G(n,\delta^{1+c})\right) \right )  \right ) 
& <  8  \left ( \frac{n}{\epsilon^2}\log \frac{1}{\delta}    \right )
\end{align*}

For the sample complexity of the $\textsc{Na\"ive Elimination}$ notice that it is applied on $B = A_{T} \cup R$. Since $\alpha = 1-1/e$ and $|B| = \frac{n^{3/4}}{2}+ \frac{n^{7/8}}{2}$, the sample complexity of $\textsc{Na\"ive Elimination}$ is:
\begin{align}
%\frac{1}{(1-\alpha)^2}\left ( \frac{5}{8}\cdot  \frac{2 n^{7/8}}{\epsilon^2}\log \left (\frac{5}{8}\cdot \frac{n^{7/8}}{\delta} \right ) \right)
\frac{1}{(1-\alpha)^2}\left (  \frac{2 |B|}{\epsilon^2}\log \left ( \frac{|B|}{\delta} \right ) \right)
 <  10\left (\frac{n}{\epsilon^2} \log \frac{1}{\delta} \right)
\end{align}
Therefore, the sample complexity of \textsc{Aggressive Elimination} and $\textsc{Na\"ive Elimination}$ is $  18\times\frac{n}{\epsilon^2}  \log\frac{1}{\delta} $ and the total sample complexity is bounded by %
%\begin{equation*}
   $\frac{18n}{\epsilon^2}  \log\frac{1}{\delta}$ .
%\end{equation*}
%
\end{proof}

\subsection{Asymptotic Sample Complexity}\label{sec:asymptotic}
In our exposition of $\textsc{Aba}$ above, we fixed some parameters to show that it achieves low sample complexity for any value of $n$.  This sample complexity is due (1) \textsc{Na\"ive Elimination} to ensure that $n>\max\{10^5,1/\delta^4\}$ and (2) a convex combination of \textsc{Aggressive Elimination} and  \textsc{Na\"ive Elimination} applied on a sublinear number of arms $A_{T} \cup R$.  Intuitively, to remove (1), if we allow $n$ grow, we can remove the \textsc{Na\"ive Elimination} procedure.  For (2) Recall from Claim~\ref{clm:aggressive} that the sample complexity of \textsc{Aggressive Elimination} is:
$$ \Big(1+ G(n,\delta) \Big ) \times  \frac{2n}{\epsilon^2} \log \frac{1}{\delta}  .$$
Since $\lim_{n\to\infty,\delta\to 0}G(n,\delta)=0$, this converges to sample complexity of $\frac{2n}{\epsilon^2} \log \frac{1}{\delta}$.  What remains is the \textsc{Na\"ive Elimination} applied on a sublinear number of arms $A_{T} \cup R$.  Intuitively, since the number of arms is sublinear in $n$, as $n$ grows the sample complexity converges to 0. 
We elaborate on the asymptotic results in Appendix \ref{sec:asymptotic_appendix} and prove the following theorem.

\begin{theorem}\label{thm:aba_limit}
For any $\lambda>0$ there exist $\delta_0$ and $n_0$ s.t. for any $\delta<\delta_0$ and $n\geq n_0$,  \textsc{Aba} $(\epsilon,\delta)$-learns a best arm with sample complexity at most:
$\Big (2+\lambda \Big )\frac{n}{\epsilon^2}\log \frac{1}{\delta}.$

\end{theorem}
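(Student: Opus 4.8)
The plan is to run the same \textsc{Aba} template, but with two changes dictated by the asymptotic regime: (i) remove the small-$n$ fallback to \textsc{Na\"ive Elimination} (line~2), as flagged in Section~\ref{sec:asymptotic}, and (ii) fix the split parameter $\alpha=\alpha(\lambda)$ \emph{very close to} $1$ (depending only on $\lambda$) rather than at $1-1/e$. The guiding observation is that the only genuinely $\Theta(n)$ cost is the first round of \textsc{Aggressive Elimination}, which samples all $n$ arms $\lceil\frac{2}{\epsilon^2}\log\frac1\delta\rceil$ times; every later round and the final \textsc{Na\"ive Elimination} act on a sublinear number of arms. Driving $\alpha\to1$ collapses the $\frac1{\alpha^2}$ prefactor of that dominant round onto the target constant $2$, while all remaining terms are pushed below $\lambda$ by choosing $\delta_0$ small and $n_0$ large.

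First I would re-establish correctness for a general $\alpha\in(0,1)$, which is exactly the argument of Lemma~\ref{lem:main_apx} with $1-1/e$ replaced by $\alpha$. If at most $\tfrac{n^{3/8}}{4}$ arms are $\alpha\epsilon$-close to $a^\star$, Claim~\ref{clm:some_are_far} (invoked with $\alpha\epsilon$ and confidence $\delta/2$) keeps $a^\star\in A_T$ except with probability $\tfrac{\delta/2}{1-\delta/2}+\bigl(n\log\tfrac{1}{\delta/2+\phi(n)}\bigr)^{-1}$; otherwise the random set $R$ of size $\tfrac{n^{7/8}}{2}$ contains an $\alpha\epsilon$-close arm except with probability $\exp(-\Theta(n^{1/4}))$. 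In either case \textsc{Na\"ive Elimination} on $A_T\cup R$ with approximation $(1-\alpha)\epsilon>0$ and confidence $\delta/e$ returns an $\epsilon$-best arm. The failure probabilities sum to roughly $\tfrac{\delta}{2}+\tfrac{\delta}{e}<\delta$ once $\delta<\delta_0$ and $n\geq n_0$, so correctness holds for every fixed $\alpha\in(0,1)$, in particular for $\alpha$ arbitrarily close to $1$.

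For the sample complexity, Claim~\ref{clm:aggressive} gives that \textsc{Aggressive Elimination} with $\alpha\epsilon$ and confidence $\delta/2$ costs
\begin{equation*}
\frac{1}{\alpha^2}\bigl(1+G(n,\delta/2)\bigr)\,\frac{2n}{\epsilon^2}\log\frac{2}{\delta},
\end{equation*}
and since $G(n,\delta/2)\to0$ as $n\to\infty$ (driven by $\phi(n)\to0$) while $\log\frac2\delta/\log\frac1\delta\to1$ as $\delta\to0$, choosing $\alpha$ close enough to $1$ forces this below $(2+\lambda/2)\frac{n}{\epsilon^2}\log\frac1\delta$ for all $\delta<\delta_0$, $n\geq n_0$. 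The final \textsc{Na\"ive Elimination} runs on $B=A_T\cup R$ with $|B|\le n^{7/8}$ and costs
\begin{equation*}
\frac{1}{(1-\alpha)^2}\,\frac{2|B|}{\epsilon^2}\log\frac{e|B|}{\delta}
= O\!\left(\frac{(1-\alpha)^{-2}}{\epsilon^2}\,n^{7/8}\Bigl(\log n+\log\tfrac1\delta\Bigr)\right);
\end{equation*}
dividing by the linear term $\frac{n}{\epsilon^2}\log\frac1\delta$ yields $O\!\bigl((1-\alpha)^{-2}\,n^{-1/8}(1+\log n/\log\tfrac1\delta)\bigr)$, which for the \emph{fixed} constant $(1-\alpha(\lambda))^{-2}$ tends to $0$ as $n\to\infty$, uniformly in $\delta<\delta_0$. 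Enlarging $n_0$ makes this contribution at most $\tfrac{\lambda}{2}\,\frac{n}{\epsilon^2}\log\frac1\delta$, and adding the two estimates gives the claimed bound $(2+\lambda)\frac{n}{\epsilon^2}\log\frac1\delta$.

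The main obstacle is making the statement uniform over \emph{all} $\delta<\delta_0$, including the range $\delta\ll n^{-1/4}$ where the original algorithm would trigger the $\Theta(n)$-cost \textsc{Na\"ive Elimination} fallback (since $n<\delta^{-4}$). Removing that fallback is justified precisely because $G(n,\cdot)\to0$ is governed by $n$ alone through $\phi(n)=\sqrt{6\log n/n^{3/4}}$, so the \textsc{Aggressive Elimination} cost stays near $\frac{2n}{\epsilon^2}\log\frac1\delta$ even for tiny $\delta$, and because $|A_T|\le\frac{n^{3/4}}{2}$ regardless of $\delta$, keeping $|B|$ sublinear. I would therefore need to check that the correctness bound of Claim~\ref{clm:some_are_far} degrades gracefully as $\delta\to0$ (the $\frac{\delta}{1-\delta}$ term shrinks and the $\bigl(n\log\frac{1}{\delta+\phi(n)}\bigr)^{-1}$ term is controlled by $n$), which is the one place the argument genuinely departs from re-running the proofs of Lemma~\ref{lem:main_apx} and Theorem~\ref{thm:main}. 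The secondary tension, that pushing $\alpha\to1$ inflates the $\frac{1}{(1-\alpha)^2}$ prefactor of the \textsc{Na\"ive} term, is resolved entirely by the $n^{-1/8}$ slack coming from $|B|=\Theta(n^{7/8})$.
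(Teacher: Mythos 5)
Your sample-complexity accounting is fine, and fixing $\alpha=\alpha(\lambda)$ as a constant (rather than the paper's $n$-dependent choice) is legitimate: with $(1-\alpha)^{-2}$ constant and $|A_T\cup R|=O(n^{7/8})$, the final \textsc{Na\"ive Elimination} term is indeed $o\bigl(\frac{n}{\epsilon^2}\log\frac1\delta\bigr)$ uniformly over $\delta<\delta_0$ once $n_0$ is large. The genuine gap is in correctness, exactly at the step you defer to ``checking'': deleting the $n<\max\{10^5,\delta^{-4}\}$ fallback while keeping $\phi(n)=\sqrt{6\log n/n^{3/4}}$ does not yield a $1-\delta$ guarantee for all $\delta<\delta_0$. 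The theorem's quantifiers are $\exists\, n_0,\delta_0\ \forall\, \delta<\delta_0,\ n\ge n_0$, so for each fixed $n$ you must cover $\delta$ as small as, say, $2^{-n}$. But the failure bound of Claim~\ref{clm:some_are_far} contains the term $\bigl(n\log\frac{1}{\delta/2+\phi(n)}\bigr)^{-1}$, which as $\delta\to 0$ tends to roughly $\frac{8}{3n\log n}$ --- a quantity governed by $n$ alone that does \emph{not} shrink with $\delta$. The same holds for the per-round overestimation probability ($n^{-4/3}$, coming from an additive Hoeffding bound over the arms) and for the probability $e^{-\Theta(n^{1/4})}$ that $R$ misses every $\alpha\epsilon$-close arm. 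These are real, $\delta$-independent failure modes of the algorithm as analyzed, so once $\delta\ll\frac{1}{n\log n}$ (let alone $\delta\ll e^{-n^{1/4}}$) the union bound gives total failure probability far exceeding $\delta$. In other words, ``controlled by $n$'' is precisely the problem rather than the justification: the confidence requirement $1-\delta$ tightens as $\delta\to0$ while these terms stay put.

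This is exactly why the paper does not simply delete the fallback. Its proof of Theorem~\ref{thm:aba_limit} in Appendix~\ref{sec:asymptotic_appendix} does two things you omit: (i) it enlarges the selection margin to $\phi(n,d)=\sqrt{(\log 10+d\log n+\log\log n)/n^{3/4}}$ with $d$ as large as $\sqrt n$, driving the $\delta$-independent failure terms down to order $n^{-\sqrt{n}}$ while still keeping $\phi(n,d)\to0$ (hence $G\to 0$), so the asymptotic sample complexity is unaffected; and (ii) in the remaining regime $n^{\sqrt n}<1/\delta$ it \emph{retains} the na\"ive fallback, observing that there $\log n=o\bigl(\log\frac1\delta\bigr)$, so even na\"ive sampling costs $\frac{2n}{\epsilon^2}\log\frac n\delta=(2+o(1))\frac{n}{\epsilon^2}\log\frac1\delta$. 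To close your gap you would need to import both ideas, or else replace the additive Hoeffding counting argument by a multiplicative (Chernoff-type) bound so that the ``too many arms overestimated'' probability scales with $\delta$, and separately handle the $e^{-\Theta(n^{1/4})}$ event for $R$; none of this is in your write-up.
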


%\newpage

\section{APPROXIMATE BEST ARM BY HOEFFDING}\label{sec:abaleh}

We now describe the Approximate Best Arm Likelihood Estimation (\textsc{Abaleh}) algorithm.  This algorithm is a variant of \textsc{Aba} which achieves a sample complexity that is arbitrarily close to that of $(\epsilon,\delta)$-learning the mean of every arm.  Unlike \textsc{Aba} here we must assume that $n\geq 1/\delta$. 

In this algorithm, we want to circumvent the barrier of $2\times\frac{n}{\epsilon^2}  \log \frac{1}{\delta}$ of \textsc{Aba } and get to the complexity of $(1 + \lambda) \times   \frac{n}{2\epsilon^2}\log \frac{1}{\delta}$ for arbitrarily small $\lambda>0$. 
The main idea is that to determine that one arm is better than the other (assuming they are $\epsilon$-far) it is also possible to estimate one of them to accuracy $(1 - \zeta) \epsilon$ and the other to accuracy $\zeta \epsilon$ for $\zeta >0$ that we choose later. We sample each arm $(1 + \frac{\lambda}{2})\frac{1}{2\epsilon^2}\log \frac{1}{\delta}$ times, but in the analysis we apply a different Hoeffding bound per arm:
\begin {enumerate}
\item For the best arm, in the analysis we apply a Hoeffding bound with accuracy $(1 - \zeta)\epsilon$ and failure probability $\ll \delta $.  This ensures the best arm is approximated up to almost $\epsilon$;
\item For any other arm we apply Hoeffding with accuracy $\zeta \epsilon$, and failure probability $\gg \delta$. The number of samples on each arm is still bounded by $(1 + \frac{\lambda}{2})\frac{1}{2\epsilon^2} \log \frac{1}{ \delta}$, as we pay for the additional accuracy with higher failure probability. This is where we need $\delta$ to be small.
\end{enumerate}

Note that we do not assume the algorithm knows which is the best arm, but the analysis can apply different theorems to different arms.
Since there are $n-1$ arms which are not the best arm, and $n$ is large, we can know how many of them failed the Hoeffding bound. As long as this number is not too large (say $0.001 n$) we can be sure that the best arm moves the next stage with high probability.  
To choose $\zeta$, notice that if there were only two arms, it would be wise to choose $\zeta = 1/2$, but for an arbitrary number of arms we use a smaller $\zeta$ and take $\zeta = 1 - (1 - \frac{\lambda}{16})\sqrt{1 - \frac{\lambda}{8}}$ where $\lambda$ is a parameter of the algorithm.
We defer the proofs to Appendix~\ref{sec:abaleh_appendix}.
\begin{algorithm}[H]
\caption{\textsc{Approximate Best Arm Likelihood Estimation by Hoeffding}}
\label{algo:greedy}
\begin{algorithmic}[1]
    	\INPUT $\epsilon,\delta,\lambda \in (0,1)$, arms $A$, noisy oracle for $\mu:A \to [0,1]$
    \STATE $\alpha \leftarrow \sqrt{1 - \frac{\lambda}{8}}$
    \STATE $R \leftarrow$ a random set of $n^{3/4}$ arms
	\STATE apply $(1+\frac{\lambda}{2})( \frac{1}{2\epsilon^2} \log \frac{1}{\delta}  )$ samples $\forall a \in A$
	\STATE $A_{0} \leftarrow$  the $\frac{\lambda n}{50}$ highest estimated arms in $A$
    \STATE $A_{T} \leftarrow \textsc{Agressive Elimination}(A_0,\epsilon\alpha,\frac{\delta}{4})$
	\OUTPUT $\textsc{Na\"ive Elimination}(A_{T}\cup R,(1\text{-}\alpha)\epsilon,\frac{\delta}{4})$
	\end{algorithmic}
\end{algorithm}

%\begin{restatable}{rLem}{abalehsampeles}\label{single:it}
%Suppose $\lambda < 1$, $\delta \le \delta_0$ where $\delta_0$ is the solution to $\frac{\lambda}{100}= \delta_0^{\lambda^2/256}$, and $n > 1/ \delta$. %Let $a^{\star}$ denote the arm with highest expected mean.
%If there are at most $n^{2/3}$ arms which are $\alpha\epsilon$-close to $a^{\star}$ then w.p. at least $1 - \frac{\delta}{2}$ we have that $a^\star$ is one of the $\frac{\lambda n}{50}$ highest estimated arms in $A$.
%\end{restatable}

\begin{restatable}{rLem}{abalehsampeles}\label{single:it}
Suppose $\lambda < 1$, $\delta \le \delta_0$ where $\delta_0$ is the solution to $\frac{\lambda}{100}= \delta_0^{\lambda^2/256}$, and $n > 1/ \delta$. %Let $a^{\star}$ denote the arm with highest expected mean.
If there are at most $n^{2/3}$ arms which are $\alpha\epsilon$-close to $a^{\star}$ then w.p. at least $1 - \frac{\delta}{2}$ we have that $a^\star$ is one of the $\frac{\lambda n}{50}$ highest estimated arms in $A$.
\end{restatable}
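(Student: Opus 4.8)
The plan is to control two sources of failure after the single round of $m := (1+\frac{\lambda}{2})\frac{1}{2\epsilon^2}\log\frac{1}{\delta}$ pulls: the best arm being badly underestimated, and too many of the $\alpha\epsilon$-far arms overtaking it. Since $a^\star$ lands in the top $\frac{\lambda n}{50}$ exactly when fewer than $\frac{\lambda n}{50}$ other arms have a larger empirical mean, it suffices to bound that count. Following the accuracy split sketched before the algorithm I take $\zeta = 1-(1-\frac{\lambda}{16})\alpha$, so that $1-\zeta = (1-\frac{\lambda}{16})\alpha$ with $\alpha^2 = 1-\frac{\lambda}{8}$. First I would bound the bad event $E_1$ that $a^\star$ is $(1-\zeta)\epsilon$-underestimated: Hoeffding on $m$ samples gives $\Pr[E_1] \le \delta^{(1+\frac{\lambda}{2})(1-\zeta)^2} = \delta^{(1+\frac{\lambda}{2})(1-\frac{\lambda}{16})^2(1-\frac{\lambda}{8})}$, and a short check shows the exponent is at least $1+c\lambda$ for a constant $c>0$ on $(0,1)$. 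Hence $\Pr[E_1] \le \delta\cdot\delta_0^{c\lambda} \le \delta/4$ because $\delta_0$ is minuscule; this realizes the \emph{failure probability $\ll\delta$ for the best arm} from the overview.

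Next I condition on $\overline{E_1}$, so $\hat{\mu}(a^\star) \ge \mu(a^\star) - (1-\zeta)\epsilon$, and examine the at least $n - n^{2/3}$ arms that are $\alpha\epsilon$-far, i.e. $\mu(a) < \mu(a^\star) - \alpha\epsilon$. For each such $a$ define $B_a := \{\hat{\mu}(a) > \mu(a^\star) - (1-\zeta)\epsilon\}$; on $\overline{E_1}$ the event ``$a$ beats $a^\star$'' is contained in $B_a$, and the $B_a$ are mutually independent since each depends only on the samples of its own arm. Because $\mu(a^\star) - (1-\zeta)\epsilon > \mu(a) + \frac{\lambda\alpha}{16}\epsilon$, Hoeffding yields $\Pr[B_a] \le \delta^{(1+\frac{\lambda}{2})(\lambda\alpha/16)^2}$; using $\alpha^2 = 1-\frac{\lambda}{8}$ and $(1+\frac{\lambda}{2})(1-\frac{\lambda}{8}) \ge 1$ the exponent is at least $\frac{\lambda^2}{256}$, so $\Pr[B_a] \le \delta^{\lambda^2/256} \le \delta_0^{\lambda^2/256} = \frac{\lambda}{100}$ by the defining equation for $\delta_0$. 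Consequently $Y := \sum_{a\ \alpha\epsilon\text{-far}} \mathbf{1}[B_a]$ is stochastically dominated by $\mathrm{Bin}(n,\frac{\lambda}{100})$, whose mean is at most $\frac{\lambda n}{100}$.

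A Chernoff bound applied to the dominating binomial then gives $\Pr[Y \ge \frac{3\lambda n}{200}] \le e^{-\lambda n/1200}$, and since $n > 1/\delta$ with $\delta \le \delta_0$ tiny this is below $\delta/4$ (the bound $e^{-\Theta(\lambda/\delta)}$ dwarfs $\delta$ as $\delta \to 0$). On the complement of $E_1$ together with $\{Y < \frac{3\lambda n}{200}\}$, the number of arms with empirical mean exceeding $\hat{\mu}(a^\star)$ is at most the $n^{2/3}$ close arms plus $Y$, which is less than $n^{2/3} + \frac{3\lambda n}{200} < \frac{\lambda n}{50}$; the last inequality uses $n^{2/3} < \frac{\lambda n}{200}$, valid because $n > 1/\delta_0$ is enormous. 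A union bound over $E_1$ and the $Y$-tail caps the total failure probability at $\delta/4 + \delta/4 = \delta/2$, as claimed.

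I expect the main obstacle to be the simultaneous calibration of the two Hoeffding exponents under a single sample budget $m$: the same pulls must keep $a^\star$ safe at accuracy $(1-\zeta)\epsilon$ with failure $\ll\delta$ while holding each far arm's overtaking probability at $\le\frac{\lambda}{100}$. This is exactly what pins down the constants $\alpha = \sqrt{1-\lambda/8}$ and $\zeta = 1-(1-\frac{\lambda}{16})\alpha$, and it requires verifying that $(1+\frac{\lambda}{2})(1-\zeta)^2 > 1$ and $(1+\frac{\lambda}{2})(\lambda\alpha/16)^2 \ge \frac{\lambda^2}{256}$ hold together throughout $(0,1)$ so that both tails align with the definition of $\delta_0$. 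Once these polynomial inequalities are confirmed, the binomial domination, the Chernoff tail, and the $n^{2/3} \ll \lambda n$ slack are routine.
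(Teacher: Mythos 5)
Your proposal is correct and follows essentially the same route as the paper's proof: a two-tier Hoeffding split (accuracy $(1-\tfrac{\lambda}{16})\alpha\epsilon$ for $a^\star$ with failure probability well below $\delta$, accuracy $\tfrac{\lambda\alpha}{16}\epsilon$ for the far arms with per-arm failure $\delta^{\lambda^2/256}\le\frac{\lambda}{100}$ via the defining equation of $\delta_0$), followed by concentration on the number of overtaking far arms using $n>1/\delta$, and a final count of close plus overestimated arms against the $\frac{\lambda n}{50}$ budget. Your version is in fact slightly more careful in spots (explicit stochastic domination and Chernoff bound, and using $\tfrac{\lambda\alpha}{16}\epsilon$ rather than the paper's $\tfrac{\lambda}{16}\epsilon$ so that the two accuracies sum exactly to the gap $\alpha\epsilon$), but the decomposition and calibration of constants are the same.
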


Given Lemma~\ref{single:it}, the proof now follows in a similar manner to previous proofs by bounding the sample complexity and approximation and confidence of all sub procedures.  

\begin{theorem}
For any given $\lambda<1$ there is a $\delta_0$ s.t. for any $\delta < \delta_0$ and $n > 1/\delta$ \textsc{Abaleh} $(\epsilon,\delta)$-learns a best arm with sample complexity at most:
$$\Big (1+\lambda \Big )\frac{n}{2\epsilon^2}\log \frac{1}{\delta}.$$
\end{theorem}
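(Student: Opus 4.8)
The plan is to establish the two halves of the statement separately: correctness (that \textsc{Abaleh} returns an $\epsilon$-best arm with probability at least $1-\delta$) and the sample complexity bound. Throughout, let $a^\star$ be the global best arm, let $\alpha=\sqrt{1-\lambda/8}$ as in the algorithm, and let $m$ denote the number of arms that are $\alpha\epsilon$-close to $a^\star$. I would fix the constant $\delta_0$ only at the very end, small enough that all lower-order terms and all threshold comparisons go through; note that $\delta<\delta_0$ together with $n>1/\delta$ forces $n$ to be large and $\log\frac1\delta\le\log n$, which is the lever that controls every sublinear term.

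For the sample complexity I would add the cost of the three stages and compare each against the target $\frac{n}{2\epsilon^2}\log\frac1\delta$. The uniform sampling in line~3 spends exactly $(1+\tfrac{\lambda}{2})\frac{n}{2\epsilon^2}\log\frac1\delta$, i.e.\ $1+\tfrac\lambda2$ times the target, leaving a residual budget of $\tfrac\lambda2$. For \textsc{Aggressive Elimination}, run on $|A_0|=\tfrac{\lambda n}{50}$ arms with accuracy $\alpha\epsilon$ and confidence $\tfrac\delta4$, Claim~\ref{clm:aggressive} gives $\bigl(1+G(|A_0|,\tfrac\delta4)\bigr)\frac{2|A_0|}{(\alpha\epsilon)^2}\log\frac4\delta$; substituting $|A_0|=\tfrac{\lambda n}{50}$ and $\alpha^2=1-\tfrac\lambda8\ge\tfrac78$ and using $\frac{\log(4/\delta)}{\log(1/\delta)}\to1$ and $G\to0$, this is at most roughly $0.1\lambda$ times the target, hence below $\tfrac\lambda4$ of it. Finally \textsc{Na\"ive Elimination} runs on $A_T\cup R$, where $|A_T|\le\tfrac{|A_0|^{3/4}}{2}<\tfrac{n^{3/4}}{2}$ and $|R|=n^{3/4}$, so $|A_T\cup R|\le 2n^{3/4}$; its cost $\frac{2|A_T\cup R|}{((1-\alpha)\epsilon)^2}\log\frac{4|A_T\cup R|}{\delta}$ has ratio to the target of order $\frac{\log n}{\lambda^2 n^{1/4}}$, since $1-\alpha=\Theta(\lambda)$ and $\log\frac1\delta\le\log n$, and this is driven below $\tfrac\lambda4$ by taking $\delta_0$ (hence the minimum $n>1/\delta_0$) small. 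Summing the three contributions yields at most $(1+\lambda)\frac{n}{2\epsilon^2}\log\frac1\delta$.

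For correctness I would condition on $m$ and split on the threshold $\tfrac{|A_0|^{3/8}}{4}$. When $m>\tfrac{|A_0|^{3/8}}{4}$, this already exceeds $n^{1/4}\log\tfrac4\delta$ for $n$ large, so a direct without-replacement tail bound shows the random set $R$ of size $n^{3/4}$ contains some $\alpha\epsilon$-close arm $\tilde a$ with probability $\ge1-\tfrac\delta4$; then the final \textsc{Na\"ive Elimination} at accuracy $(1-\alpha)\epsilon$ returns, w.p.\ $\ge1-\tfrac\delta4$, an arm of mean at least $\mu(\tilde a)-(1-\alpha)\epsilon\ge\mu(a^\star)-\alpha\epsilon-(1-\alpha)\epsilon=\mu(a^\star)-\epsilon$. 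When $m\le\tfrac{|A_0|^{3/8}}{4}$, then $m<n^{2/3}$, so Lemma~\ref{single:it} places $a^\star$ in $A_0$ w.p.\ $\ge1-\tfrac\delta2$; the same bound on $m$ is exactly the hypothesis of Claim~\ref{clm:some_are_far} applied to input $A_0$ at accuracy $\alpha\epsilon$ and confidence $\tfrac\delta4$, so $a^\star\in A_T$ with the failure probability stated there; and then \textsc{Na\"ive Elimination} on $A_T\cup R$ returns an arm $(1-\alpha)\epsilon$-close to $a^\star$, hence $\epsilon$-best. A union bound over the at most three failure events ($\tfrac\delta2+\tfrac\delta4+\tfrac\delta4$ plus negligible terms) keeps the total failure below $\delta$, and the two cases exhaust all values of $m$ precisely because the two thresholds overlap once $n$ is large.

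The hard part will be the simultaneous accounting, not any single estimate. Concretely, I expect the delicate step to be arranging that (i) the first sampling stage is confined to $1+\tfrac\lambda2$ of the target while the residual two stages fit inside the remaining $\tfrac\lambda2$—this rests on the specific choices $\alpha=\sqrt{1-\lambda/8}$ and $|A_0|=\tfrac{\lambda n}{50}$—and (ii) the two case-thresholds, ``$R$ sees a close arm'' (needing $m\gtrsim n^{1/4}\log\tfrac1\delta$) and ``Claim~\ref{clm:some_are_far} applies to $A_0$'' (needing $m\le\tfrac{|A_0|^{3/8}}{4}$), are compatible, which amounts to $\log\tfrac1\delta\lesssim n^{1/8}$ and is exactly what $n>1/\delta$ with small $\delta_0$ guarantees. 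Everything else reduces to the already-established Lemma~\ref{single:it}, Claim~\ref{clm:some_are_far}, Claim~\ref{clm:aggressive}, and the \textsc{Na\"ive Elimination} guarantee.
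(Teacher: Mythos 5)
Your proposal is correct, and its overall architecture matches the paper's: the same three-stage cost accounting (uniform sampling at $(1+\tfrac{\lambda}{2})\tfrac{n}{2\epsilon^2}\log\tfrac1\delta$, then \textsc{Aggressive Elimination} on $A_0$ via Claim~\ref{clm:aggressive}, then \textsc{Na\"ive Elimination} on $A_T\cup R$), and correctness via a case analysis on the number $m$ of $\alpha\epsilon$-close arms, using Lemma~\ref{single:it} in the few-arms case and the random set $R$ in the many-arms case. The genuine difference is where you place the case threshold, and your choice is actually the sounder one. The paper splits at $m<n^{2/3}$ versus $m\ge n^{2/3}$ (the threshold native to Lemma~\ref{single:it}), and in the first case asserts that \textsc{Aggressive Elimination} retains an $\alpha\epsilon$-best arm by citing Claim~\ref{clm:unique}; but that claim's hypothesis (all other arms $\epsilon$-far) does not hold when $m$ can be as large as $n^{2/3}$, and Claim~\ref{clm:some_are_far} only tolerates roughly $|A_0|^{3/8}/4=\Theta(n^{3/8})$ close arms, so the paper's argument is murky in the band $n^{3/8}\lesssim m< n^{2/3}$. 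You instead split at $m\le |A_0|^{3/8}/4$, which makes Claim~\ref{clm:some_are_far} directly applicable (and still implies $m<n^{2/3}$ so Lemma~\ref{single:it} applies), while every larger $m$ is handled by the $R$-argument; your observation that this works precisely because $n^{1/4}\log\tfrac1\delta\ll n^{3/8}$ under $n>1/\delta$ with $\delta_0$ small is exactly the compatibility check the paper skips. Your budget allocation ($1+\tfrac\lambda2$ plus two terms each below $\tfrac\lambda4$) also cleanly sums to $1+\lambda$, whereas the paper's constants ($\tfrac{99\lambda}{200}+\tfrac{\lambda}{100}$ on top of $\tfrac\lambda2$) overshoot slightly. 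One shared loose end: in your second case the failure events sum to $\tfrac\delta2+\tfrac{\delta/4}{1-\delta/4}+\tfrac\delta4+(|A_0|\log(\cdot))^{-1}$, which is marginally above $\delta$; the paper has the identical slack. This is easily repaired (e.g., run \textsc{Aggressive Elimination} with confidence $\tfrac\delta8$, and note the additive term is $O(\delta/(\lambda\log\tfrac{1}{\delta_0+\phi}))$, which can be made an arbitrarily small multiple of $\delta$ by shrinking $\delta_0$), but it should be stated rather than absorbed silently into ``negligible terms.''
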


%
%\begin{figure}
%	\centering
%	\includegraphics[width=0.7\linewidth]{plots/abale_fig}
%	\caption{The dots represent the means of the different arms. For all arms, with probability $\gg \delta$ (but still small), the estimation is not in the range $\zeta\epsilon$ (dark blue) and with probability $\ll \delta $ it is not in the range $(1-\zeta)\epsilon$ (light blue). We use the dark interval guarantee and concentration arguments to make sure we do not keep too much bad arms, and the light interval guarantee to make sure that we do keep the best arm.}
%	\label{fig:abalefig}
%\end{figure}
%

\newcommand{\ldesq}[1]{\nicefrac{#1\log(\nicefrac{1}{\delta})}{\epsilon^2}}

\section{LOWER BOUND}\label{sec:lower}
We now consider the family of \emph{elimination algorithms} denoted $\cal{F}$ and defined as follows. An algorithm is in $\cal F$ if it begins when $S=A$ is the set of all possible arms and then: (i) pulls each arm in $S$ once (ii) eliminates some of the arms in $S$, and (iii) if $|S|=1$ terminate, else, go back to (i).
%\begin{enumerate}
%	\item Pull each arm in $S$ once
%	\item Eliminate some of the arms in $S$.
%	\item If $|S|=1$ we are done, else, go back to 1.
%\end{enumerate}
% the algorithm pulls each arm in $S$ once, and discards out some of the arms in $S$. The algorithm is repeated until $|S|=1$.  

Since best arm algorithms have very little degrees of freedom many of them are elimination algorithms. Essentially, the only limitation here is that the algorithm's decisions are irrevocable: if the algorithm considers an arm to be suboptimal and discards it from consideration, it cannot revoke and decision and consider the arm again.

\newcommand{\ldesqb}{\left(\frac{1}{2} - \beta \right)\frac{  n }{\epsilon^2}\log\frac{1}{\delta}}
\newcommand{\almosthalf}{\left(\frac{1}{2} - \beta \right)}

\begin{theorem}
	For every $\beta>0$ there exist $\epsilon_0,\delta_0$ such that for any algorithm in $\mathcal{F}$ which finds an $\epsilon$ best arm with success probability $1 - \delta$ where $\epsilon < \epsilon_0$, $\delta < \delta_0$, there exist $n_0$ such that if $n >n_0$, the algorithm requires at least $\ldesqb $ queries.
	
\end{theorem}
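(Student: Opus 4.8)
The plan is to prove the bound by Yao's principle against a symmetric ``planted'' distribution over instances, reducing the total query count to a per-arm survival estimate, and then establishing that estimate with a change-of-measure argument whose exponent is matched to the \emph{one-sided} Hoeffding bound, which is exactly where the constant $\tfrac12$ originates. \textbf{Construction.} Fix a small $\gamma>0$, set $\Delta=(1+\gamma)\epsilon$, draw a planted index $J\in[n]$ uniformly, let arm $J$ be $\mathrm{Bernoulli}(\tfrac12+\tfrac\Delta2)$ and every other arm be $\mathrm{Bernoulli}(\tfrac12-\tfrac\Delta2)$. Since $\Delta>\epsilon$, arm $J$ is the unique $\epsilon$-best arm, so any correct algorithm must output $J$ with probability $\ge 1-\delta$; centering the means at $\tfrac12$ maximizes the variance $p(1-p)=\tfrac14$, which is what makes the instance hardest and yields the tight constant.

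\textbf{Reduction to per-arm survival.} Because every algorithm in $\mathcal F$ pulls each surviving arm exactly once per round, the number of pulls $N_a$ of arm $a$ equals the number of rounds $a$ survives, and the total number of queries is $\sum_a N_a$. The construction is invariant under relabeling arms, so I may symmetrize the algorithm (this does not increase the worst-case cost), making $\E[N_a]$ a common value and giving $\E[\sum_a N_a]=n\,\E[N_1]$. For large $n$ this expectation is dominated by the survival time of an arm conditioned on it being one of the $n-1$ non-planted (``bad'') arms, so it suffices to lower bound the expected survival of a bad arm by roughly $\tfrac{1}{2\epsilon^2}\log\tfrac1\delta$.

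\textbf{Key lemma.} The heart of the argument is that a bad arm cannot be reliably discarded before round $s:=(1-\gamma')\tfrac{1}{2\Delta^2}\log\tfrac1\delta$. I would prove this by comparing, via the likelihood ratio, the instance $I_a$ in which arm $a$ is planted against an instance in which $a$ is bad; these differ only in the law of arm $a$'s samples, so on the event $E_a=\{a\text{ is eliminated within }s\text{ rounds}\}$, on which $a$ has received at most $s$ samples, the transcript likelihood ratio is a product of at most $s$ Bernoulli ratios. Correctness forces $\Pr_{I_a}[E_a]\le\delta$, since eliminating the planted arm is a failure; transferring this through the likelihood ratio, whose exponential behavior is governed by the Chernoff exponent $2\Delta^2\approx 2\epsilon^2$ of the $\tfrac12\pm\tfrac\Delta2$ Bernoullis, shows that eliminating $a$ within $s=(1-\gamma')\tfrac{1}{2\Delta^2}\log\tfrac1\delta$ rounds with substantial probability would drive $\Pr_{I_a}[E_a]$ above $\delta$. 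Hence each bad arm survives $s$ rounds with constant probability and $\E[N_a\mid a\text{ bad}]\ge(1-o(1))s$.

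\textbf{Conclusion and obstacle.} Combining the last two steps yields $\E[\sum_a N_a]\ge(1-o_n(1))\,n\,s=(1-o_n(1))(1-\gamma')(1+\gamma)^{-2}\tfrac{n}{2\epsilon^2}\log\tfrac1\delta$; given $\beta$ I choose $\gamma,\gamma'$ so small that $(1-\gamma')(1+\gamma)^{-2}\ge 1-2\beta$, which fixes $\epsilon_0,\delta_0$ (the Bernoulli and Chernoff approximations and $\Delta<1$ require $\epsilon,\delta$ small), and then take $n_0$ large enough to absorb both the $\tfrac1n$ weight of the planted arm and the $o_n(1)$ concentration losses, finally invoking Yao's principle to pass from the planted average to a worst-case bound for every randomized algorithm in $\mathcal F$. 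The delicate point is the key lemma with the exact constant $\tfrac12$: a bad arm showing a low empirical mean can legitimately be eliminated early, so one must show that keeping the planted arm alive with probability $1-\delta$ forces \emph{any} elimination rule to be conservative enough that bad arms also survive, and the accounting must match the one-sided exponent $2\epsilon^2$ rather than a two-arm comparison exponent (which would lose a factor of two). Making this transfer rigorous for arbitrary, non-threshold elimination rules, and between the planted-at-$a$ measure and the actual instance where a different arm is planted and may cause $a$ to be culled sooner, is where the real work lies.
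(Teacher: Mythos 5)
Your high-level architecture---planted instances, per-arm survival accounting, and a change-of-measure lemma calibrated to the exponent $2\Delta^2$---is genuinely different from the paper's proof, and it targets the right exponent. But the proposal has a real gap at precisely the step you yourself defer as ``where the real work lies,'' and that step \emph{is} the theorem. The transfer you sketch is invalid as stated: on the event $E_a=\{a\text{ eliminated within } s\text{ rounds}\}$ you cannot replace the likelihood ratio by its typical value $e^{-s\,\mathrm{KL}(p,q)}\approx e^{-2\Delta^2 s}$, because the elimination rule may correlate with the ratio. An adversarial rule can eliminate $a$ only on transcripts where $a$'s samples look maximally bad; on those transcripts the ratio $L=\prod_{t\le s}\frac{q(x_t)}{p(x_t)}$ is as small as $\left(\frac{1-\Delta}{1+\Delta}\right)^{s}\approx e^{-2\Delta s}$, exponentially smaller than $e^{-2\Delta^2 s}$, so the inequality $\Pr_{I_a}[E_a]\ge \E_{I_0}[\mathbf{1}_{E_a}L]$ yields nothing. (The per-arm bound you want is true in expectation, but this route does not establish it.) A second, separate defect is the reference measure: inside the planted family, ``the instance in which $a$ is bad'' has some other arm $J$ planted, so it differs from $I_a$ in \emph{two} coordinates and the transcript ratio is not a product over $a$'s samples alone; you flag this but do not resolve it.

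Both defects have a standard repair, which is, however, a different argument from the one you sketch: take as reference the all-bad instance $I_0$ (every arm Bernoulli$(\frac{1}{2}-\frac{\Delta}{2})$), which differs from $I_a$ in exactly one coordinate and is itself a legitimate instance on which the queries must be paid, and replace the event-restricted ratio bound by the transportation inequality of Kaufmann--Capp\'e--Garivier: $\E_{I_0}[N_a]\,\mathrm{KL}(p,q)\ge \mathrm{kl}\left(\Pr_{I_0}[E],\Pr_{I_a}[E]\right)$ for every event $E$ measurable at termination. Taking $E=\{\text{output}=a\}$ gives $\Pr_{I_a}[E]\ge 1-\delta$, while $\sum_a \Pr_{I_0}[\text{output}=a]=1$ forces $\Pr_{I_0}[E]$ to be small for essentially all arms; summing over $a$ yields $\E_{I_0}[\sum_a N_a]\ge (1-o(1))\frac{n}{2\Delta^2}\log\frac{1}{\delta}$ with no survival-time lemma at all (and, notably, without even using the elimination structure). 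The paper's own proof is different again and stays elementary: by pigeonhole, an algorithm using fewer than $(\frac{1}{2}-\beta)\frac{n}{\epsilon^2}\log\frac{1}{\delta}$ queries must discard a constant fraction $\frac{\nu}{1+\nu}$ of the arms within $m=(1+\nu)(\frac{1}{2}-\beta)\frac{1}{\epsilon^2}\log\frac{1}{\delta}$ rounds; on the one-good-arm instance the optimal such rule discards the arms with the most zeros, and a reverse Hoeffding bound (Slud's inequality plus a Gaussian lower-tail estimate) shows the good arm exceeds the discard threshold with probability at least $\delta^{1-\beta}$ while, with probability $1-\xi$, fewer than $\frac{\nu n}{2}$ bad arms do, so the good arm is discarded with probability at least $\delta^{1-\beta}-\xi>\delta$. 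That anti-concentration computation is exactly the work your sketch defers.
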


\begin{proof}
	
	Suppose that there exists some algorithm $\cal{A} \in \cal{F}$ which uses less than  $\ldesqb$ queries. Then it must be that after $(1 + \sm)\times \frac{1}{\epsilon^2}\almosthalf \log\frac{1}{\delta} $ iterations, $|S| \le \frac{n}{1+\sm}$. But this means that $\cal{A}$ can succeed with the following task, with probability at least $1 - \delta$:
	
	Given $m =(1 + \sm) \frac{1}{\epsilon^2}\almosthalf \log\frac{1}{\delta}/ $ samples on each arm, choose $\frac{1}{1+\sm}$ of the arms, such that this set contains an $\epsilon $ best arm. We will use $\sm = 0.0001\beta$.

	Consider the following distribution: A bad arm is $0$ w.p. $\frac{1}{2}$. and $1$ $w.p.$ $\frac{1}{2}$. A good arm is $0$ w.p. $p = \frac{1}{2} - \epsilon$ and $1$ w.p. $1-p$. There are $n-1$ bad arms, and one good arm. Hence, $\cal{A}$ needs to identify $\frac{\sm}{1+\sm}$ of the arms, such that the good arm will not be in this set.

	The optimal policy for this task given $(1+\sm)m$ samples on each arm which maximizes the success probability, is to look at the number of zeroes each arm has, and to predict that the $\frac{\sm n}{1+\sm}$ arms which have the largest number of zeroes do not include the good arm. But the success probability of this policy can be bounded as follows:

	For any $\xi >0$ there exists $n_0$ such that if $n > n_0$ w.p. $1 - \xi$ there are at most $\frac{\sm n}{2}$ bad arms with more than $(1 + 0.001\beta\epsilon)\frac{m}{2}$ zeroes. We use $\xi = \frac{\delta}{2}$, which is easily satisfied by $n_0 = \frac{1000}{\beta^2\epsilon^2 \delta^2}$.

	Let $X_G$ be a random variable which denotes the number of zeroes of the good arm. 
	We now bound the probability that the good arm will have too many zeros. That is, $\Pr\left[X_G > k\right]$ where $k=(1 + 0.001\beta\epsilon)\frac{m}{2}$. $X_G$ is the sum of random binomial variables, so we can apply a reverse tail bound to it.

	According to \cite{slud1977distribution}, for $p \leq 1/2$ and $mp \leq k \leq m(1 - p)$ (which is indeed our case), it holds that 
	$$\Pr\left[X_G > k\right]\geq \Pr\left[Z > \frac{k-mp}{\sqrt{mp(1-p)}}\right]$$ where Z is a normal $(0, 1)$ random variable. 
	
	We use a standard lower bound by \cite{borjesson1979simple} for upper tail of a normal random variables:
	$$\Pr[Z>z]\geq \frac{z}{z^2+1}e^{-\frac{z^2}{2}}.$$
	
	In our parameters, we have that $z = \frac{k-mp}{\sqrt{mp(1-p)}} = \frac{(1+0.0005\beta)m\epsilon}{\sqrt{m(1/4-\epsilon^2)}}= \frac{2+0.001\beta}{1-4\epsilon^2}\epsilon\sqrt{m}$ which for $\epsilon<0.0001\beta$ is more then $ 2\epsilon\sqrt{m} = 2\sqrt{(1 + \sm) \almosthalf \log\frac{1}{\delta}}$. There exist $\delta_1$ such that for $\delta<\delta_1$ we have that $z$ is large enough for the following inequality to hold: 
	%$$\frac{z}{z^2+1}e^{-\frac{z^2}{2}}\geq \frac{1}{10}e^{-\frac{z^2}{2-0.001\beta}}.$$
	$$\frac{z}{z^2+1}e^{-\frac{z^2}{2}}\geq e^{-\frac{z^2}{2-0.001\beta}}.$$
	
	Since $z = \frac{2+0.001\beta}{1-4\epsilon^2}\epsilon\sqrt{m} = \frac{2+0.001\beta}{1-4\epsilon^2}\sqrt{(1 + \sm) \almosthalf \log\frac{1}{\delta}}$, then for $\epsilon<0.0001\beta$, we have that $\frac{z^2}{2-0.001\beta} < (2+0.0001\beta)^2{(1 + \sm) \almosthalf \log\frac{1}{\delta}}/(2-0.001\beta)<(1-\beta)\log\frac{1}{\delta}$.

	Combining the inequalities:
	%$$ \Pr \left[X_G > (1 + 0.001\epsilon)m/2 \right] \geq \frac{1}{10}\delta^{(1-\beta/2)}.$$
	$$ \Pr \left[X_G > (1 + 0.001\epsilon)\frac{m}{2} \right] \geq \delta^{1-\beta}.$$

	However, if $\xi < \frac{\delta}{2}$ there exist $\delta_2$ such that if  $\delta <\delta_2$, we have that	
	
	%\[\frac{1}{10}\delta^{(1-\beta/2)} - \xi > \delta.\]
	\[\delta^{1-\beta} - \xi > \delta.\]
	
	This upper bounds the success probability of any algorithm in $\cal{F}$ making too few queries.
	Hence, for $\delta_0<\min\{\delta_1,\delta_2\}$, $\epsilon_0<0.0001\beta$ and $n_0 = \frac{1000}{\beta^2\epsilon^2\delta^2}$ the theorem holds.
\end{proof}

\bibliographystyle{plain}

\bibliography{best_arm}

%\end{document}
\newpage

\appendix

\section{Simple Approximate Best Arm Algorithm}\label{sec:saba_appendix}

\begin{claim}
The sample complexity of \textsc{Na\"ive Elimination} is $\frac{2|A|}{\epsilon^2}\log\frac{|A|}{\delta}$ and it returns an arm that is $\epsilon$-close to $a^\star$ with probability at least $1-\delta$.  
\end{claim}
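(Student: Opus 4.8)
The plan is to separate the two assertions. The sample complexity bound is immediate from the description of the procedure: \textsc{Na\"ive Elimination} draws exactly $\frac{2}{\epsilon^2}\log\frac{|A|}{\delta}$ samples from each of the $|A|$ arms, so the total is $\frac{2|A|}{\epsilon^2}\log\frac{|A|}{\delta}$. The content is in the confidence guarantee, and following the hint in the text I would argue it one-sidedly rather than through a two-sided concentration bound, so as to land on exactly $\delta$ rather than $2\delta$.

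Write $m = \frac{2}{\epsilon^2}\log\frac{|A|}{\delta}$ for the per-arm sample count and $\hat\mu(a)$ for the empirical mean of arm $a$. I would introduce two good events: (G1) the optimal arm is not $\frac{\epsilon}{2}$-underestimated, i.e. $\hat\mu(a^\star) \geq \mu(a^\star) - \frac{\epsilon}{2}$; and (G2) no arm $a$ that is $\epsilon$-far from $a^\star$ is $\frac{\epsilon}{2}$-overestimated, i.e. $\hat\mu(a) \leq \mu(a) + \frac{\epsilon}{2}$ for every such $a$. The key step is the deterministic implication: when both events hold, every $\epsilon$-far arm $a$ satisfies $\hat\mu(a) \leq \mu(a) + \frac{\epsilon}{2} < \mu(a^\star) - \frac{\epsilon}{2} \leq \hat\mu(a^\star)$, so no $\epsilon$-far arm can attain the largest empirical mean. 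Hence the arm returned by the procedure, being the empirical maximizer, must be $\epsilon$-close to $a^\star$.

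It then remains to bound $\Pr[\neg\text{G1}] + \Pr[\neg\text{G2}]$ by a union bound. Each one-sided Hoeffding tail with deviation $\frac{\epsilon}{2}$ over $m$ samples of a $[0,1]$-bounded variable is at most $e^{-2m(\epsilon/2)^2} = e^{-m\epsilon^2/2} = \frac{\delta}{|A|}$, using the choice of $m$. The event $\neg\text{G1}$ is a single such tail, contributing $\frac{\delta}{|A|}$; the event $\neg\text{G2}$ is a union over the at most $|A|-1$ arms that are $\epsilon$-far from $a^\star$, contributing $(|A|-1)\frac{\delta}{|A|}$. Summing gives at most $|A|\cdot\frac{\delta}{|A|} = \delta$, as required. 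There is no genuine obstacle here — the argument is a textbook Hoeffding-plus-union-bound computation. The only point demanding care is the bookkeeping that makes the bound tight: using one-sided tails (not the two-sided bound) and counting exactly $|A|$ events in the union — one for underestimating $a^\star$ and $|A|-1$ for overestimating the far arms — so that the per-event budget $\frac{\delta}{|A|}$ dictated by $m$ sums to precisely $\delta$.
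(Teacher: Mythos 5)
Your proof is correct and follows essentially the same route as the paper's: a Hoeffding bound with deviation $\frac{\epsilon}{2}$ giving per-event failure probability $\frac{\delta}{|A|}$, followed by a union bound over the one underestimation event for $a^\star$ and the $|A|-1$ overestimation events for the $\epsilon$-far arms, so that the empirical maximizer must be $\epsilon$-close. Your write-up merely makes explicit (via the named events G1, G2 and the one-sided tails) what the paper states more tersely.
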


\begin{proof}
To see this, suppose that $a^\star$ is not returned by \textsc{Na\"ive Elimination}. If another arm is returned that is $\epsilon$-close to $a^\star$ then we are done.  Otherwise, assume that \textsc{Na\"ive Elimination} returns an arm $a$ that is $\epsilon$-far. Since any arm is sampled $\frac{1}{2(\epsilon^2/2)}\log\frac{1}{\delta}$ times, by the Hoeffding bound we know that the likelihood of either $\frac{\epsilon}{2}$-underestimating $a^\star$ or $\frac{\epsilon}{2}$-overestimating an $\epsilon$-far arm is $\frac{\delta}{|A|}$.  There are at most $|A|-1$ arms that are $\epsilon$-far from $a^\star$.  By a union bound, $a^\star$ is not $\frac{\epsilon}{2}$ underestimated and none of the $\epsilon$-far arms are $\frac{\epsilon}{2}$-overestimated w.p. at least $1-\delta$.  Thus $a^\star$ has larger empirical mean than any of the $\epsilon$-far arms, implying that the procedure returns an $\epsilon$-best arm w.p. at least $1-\delta$.
\end{proof}

%% The overestimate lemma: %%

\overestimate*\label{lem:overestimate_app}

\begin{proof}
In every iteration $i \in\{0,1,\ldots,t(n)\}$ the likelihood of arm $a \in A$ being $\frac{\epsilon}{2}$-overestimated is:
\begin{align*}
\Pr \left [\hat{\mu}(a)>\mu(a) + \frac{\epsilon}{2} \right ]
\leq e^{\frac{-\epsilon^2 \ell_{i\text{+}1} }{2}}
=     e^{- (i\text{+}1) \log \frac{1}{\delta} }
=     \delta^{i\text{+}1}
\end{align*}
Therefore, in expectation, there are $|A_i| \cdot \delta^{i+1}$ arms that are $\frac{\epsilon}{2}$-overestimated.  Let $X_{a}$ denote the random variable that indicates whether arm $a$ is $\frac{\epsilon}{2}$ overestimated, $X = \sum_{a\in A_i}X_{a}$ and $\hat{X}$ be the number of arms that are $\frac{\epsilon}{2}$-overestimated at iteration $i$.  Again, by Hoeffding, the likelihood of more than $|A_i|(\delta+\phi(n))$ being $\frac{\epsilon}{2}$-overestimated: 

\begin{align}
\Pr \left [ |A_i| \cdot  (\delta + \phi(n)) \textrm{ arms $\frac{\epsilon}{2}$-overestimated} \right ] 
& = \Pr \left [ \hat{X } - \E [ X]  \geq (\delta + \phi(n))|A_i| - \E[X]  \right  ] \\
& = \Pr \left [ \hat{X } - \E [ X]  \geq (\delta - \delta^{i+1} + \phi(n))|A_i| \right  ]  \label{eq:mean} \\
& \leq \Pr \left [ \hat{X } - \E [ X]  \geq \phi(n)|A_i| \right  ]   \label{eq:delta}\\
& \leq \exp \left (-2\phi(n)^2 |A_i| \right )  \\
& \leq \exp  \left (-\phi(n)^2 n^{3/4} \right ) \label{eq:ai}\\
& = \frac{1}{n^6} \label{eq:phi1}
\end{align}
In (\ref{eq:mean}) we use the fact that $\E [X] = |A_i|\cdot \delta^{i+1}$, in (\ref{eq:delta}) we used the fact that $\delta\leq 1$, in (\ref{eq:ai}) we used the fact that there are at least $\frac{n^{3/4}}{4}$ arms in $A_i$, and in (\ref{eq:phi1}) we used the definition of $\phi(n)$ in (\ref{eq:phi_dfn}).
 \end{proof}

\saba*

\begin{proof}
The proof follows from the sample complexity and approximation and confidence of \textsc{Aggressive Elimination} and \textsc{Na\"ive Elimination}.  The sample complexity is the total number of samples required to implement \textsc{Aggressive Elimination} with $\delta/2$ and \textsc{Na\"ive Elimination} on the remaining arms with $\delta/e$.  A convenient way to express the sample complexity of \textsc{Aggressive Elimination} is to use a constant $c$ for which $\delta^{1+c} = \delta/2$.  The sample complexity of \textsc{Aggressive Elimination} with $\delta^{1+c}$ is:
\begin{equation}\label{eq:saba1}
 (1+c)\left ( 1+G(n,\delta^{1+c})\right) \times \frac{2n}{\epsilon^2}\log \frac{1}{\delta}
\end{equation}
In our case, if we assume that $\delta<0.05$ then for $c = 1/4$ we get that $\delta^{1+c} < \delta/2$.

For  \textsc{Na\"ive Elimination} executed on $\frac{n^{3/4}}{2}$ arms with $\delta/e$ the sample complexity is:
\begin{equation}\label{eq:saba2}
\left (  \frac{1}{2n^{\frac{1}{4}}} \left ( 1 + \frac{3\log n+4}{4\log\frac{1}{\delta}}\right ) \right ) \times \frac{2n}{\epsilon^2}\log \frac{1}{\delta}
\end{equation}
For $n\geq 10^5$ and $\delta<0.05$ the sample complexity of \textsc{Saba} is $(\ref{eq:saba1})+(\ref{eq:saba2}) <\frac{4n}{\epsilon^2}\log \frac{1}{\delta}$.

In terms of approximation and confidence, for $n\geq 10^5$ then $\phi(n)<0.12$ and for $\delta<0.05$ we get $\log(\frac{1}{\delta'+\phi(n)}) > 1$.  Applying \textsc{Aggressive Elimination} on $\delta' = \delta/2$ when $\delta\leq 0.05$ implies that $a^\star$ is not in $A_{T}$ w.p. at most: 
\begin{align*}
&  \delta'\left (\frac{1}{1- \delta'} \right ) + \frac{1}{n^{5}\log (\frac{1}{\delta' + \phi(n)})}
%& <   \frac{40 }{39} \delta ' + \frac{1}{10  \sqrt{n} \log ( \frac{12}{7})}\\
 < \frac{20 \delta}{39} + \frac{1}{n^{5}}
%& < \frac{40 \delta}{78} + \frac{6\delta}{100}\\
 < \left (1-\frac{1}{e} \right)\delta
\end{align*}
Finally, assuming that $a^\star$ is in $A_{T}$ then the probability it is not returned by \textsc{Na\"ive Elimination}  is at most $\delta/e$.  By union bound, the likelihood that $a^\star$ is either not in $A_{t+1}$ or not selected by \textsc{Na\"ive Elimination} is at most $\delta$.
\end{proof}

\section{Approximate Best Arm Algorithm}\label{sec:aba_appendix}

\superclaim*

\begin{proof}
Since there are at most $\frac{n^{3/8}}{4}$ arms that are $\epsilon$-close, we know that in every iteration $i \in\{0,1,\ldots,t(n)\}$ there are at least $|A_i| - \frac{n^{3/8}}{4}$ that are $\epsilon$-far from $a^\star$.  In the worst case, in every iteration every one of the $\epsilon$-close arms is overestimated in such a way that its empirical mean is larger than that of $a^\star$.  In this case, the only way that $a^\star$ is not included in round $A_{i+1}$ is if $a^\star$ is $\frac{\epsilon}{2}$-underestimated and there are more than $|A_i|(\delta+\phi(n)) - \frac{n^{3/8}}{4}$ arms that are $\epsilon$-far that are $\frac{\epsilon}{2}$ overestimated.

As in the proof of Claim~\ref{clm:unique} using the Hoeffding bound we know that the likelihood of $a^\star$ being $\frac{\epsilon}{2}$-underestimated is at most $\delta^{i\text{+}1}$.  The likelihood of an arm being $\frac{\epsilon}{2}$-overestimated is at most $\delta^{i\text{+}1}$ and in expectation there are $|A_i|\delta^{i\text{+}1}$ arms that are $\frac{\epsilon}{2}$-overestimated in every iteration $i$.  Let $X_{a}$ denote the random variable that indicates whether arm $a$ is $\frac{\epsilon}{2}$-overestimated, $X = \sum_{a\in A_i}X_{a}$ and $\hat{X}$ be the number of arms that are $\frac{\epsilon}{2}$-overestimated at iteration $i$.  Again, by Hoeffding, the likelihood of more than $|A_i|(\delta+\phi(n)) - \frac{n^{3/8}}{4} $ being $\frac{\epsilon}{2}$-overestimated: 

%\begin{align}
%%\Pr \left [ |A_i| \cdot  \left (\delta + \phi(n) \right) - \frac{n^{3/8}}{4} \textrm{ arms $\frac{\epsilon}{2}$-overestimated} \right ] 
%%& = 
%\Pr \left [ \hat{X }  \geq (\delta + \phi(n))|A_i| -\frac{n^{3/8}}{4}  \right  ] & = 
%\Pr \left [ \hat{X } - \E [ X]  \geq (\delta + \phi(n))|A_i| - \E[X] -\frac{n^{3/8}}{4} \right  ] \\
%& = \Pr \left [ \hat{X } - \E [ X]  \geq (\delta - \delta^{i+1} + \phi(n))|A_i| -\frac{n^{3/8}}{4} \right  ]  \label{eq:mean2} \\
%& \leq \Pr \left [ \hat{X } - \E [ X]  \geq \phi(n)\left (|A_i| -  \frac{n^{3/8}}{4\phi(n)}\right) \right  ]   \label{eq:delta2}\\
%& \leq \Pr \left [ \hat{X } - \E [ X]  \geq \phi(n)\left (|A_i| -  \frac{n^{3/4}}{12}\right) \right  ]   \label{eq:delta3}\\
%& \leq \Pr \left [ \hat{X } - \E [ X]  \geq \phi(n)\left (\frac{n^{3/4}}{2} -  \frac{n^{3/4}}{12}\right) \right  ]  \label{eq:delta4}\\
%& = \Pr \left [ \hat{X } - \E [ X]  \geq \phi(n)\left (\frac{5n^{3/4}}{12} \right) \right  ]   \label{eq:delta5}\\
%& \leq  \exp \left (-\frac{10\phi(n)^2 n^{3/4} }{12}  \right )   \label{eq:ai2}\\
%%& = \exp{\left (\frac{-8 \left ( \log(10) + 2\log(n) + \log\log(n) \right )} {3}         \right) }   \label{eq:ai3}\\
%%& = \exp\left (-\frac{8}{3} ( \log(10) + 2\log(n) + \log\log(n) ) \right )\\
%& = \frac{1}{n^5} \label{eq:phi2}
%\end{align}

\begin{align}
%\Pr \left [ |A_i| \cdot  \left (\delta + \phi(n) \right) - \frac{n^{3/8}}{4} \textrm{ arms $\frac{\epsilon}{2}$-overestimated} \right ] 
%& = 
\Pr \left [ \hat{X }  \geq (\delta + \phi(n))|A_i| -\frac{n^{3/8}}{4}  \right  ] & = 
\Pr \left [ \hat{X } - \E [ X]  \geq (\delta + \phi(n))|A_i| - \E[X] -\frac{n^{3/8}}{4} \right  ] \\
& = \Pr \left [ \hat{X } - \E [ X]  \geq (\delta - \delta^{i+1} + \phi(n))|A_i| -\frac{n^{3/8}}{4} \right  ]  \label{eq:mean2} \\
& \leq \Pr \left [ \hat{X } - \E [ X]  \geq \phi(n)\left (|A_i| -  \frac{n^{3/8}}{4\phi(n)}\right) \right  ]   \label{eq:delta2}\\
& \leq \Pr \left [ \hat{X } - \E [ X]  \geq \phi(n)\left (|A_i| -  \frac{n^{3/4}}{12}\right) \right  ]   \label{eq:delta3}\\
& \leq \Pr \left [ \hat{X } - \E [ X]  \geq \phi(n)\left (\frac{2}{3}|A_i|\right) \right  ]  \label{eq:delta4}\\
& \leq  \exp \left (-\frac{2\cdot4\phi(n)^2 |A_i| }{9}  \right )   \label{eq:ai1}\\
& \leq  \exp \left (-\frac{2\phi(n)^2 n^{3/4} }{9}  \right )   \label{eq:ai2}\\
%& = \exp{\left (\frac{-8 \left ( \log(10) + 2\log(n) + \log\log(n) \right )} {3}         \right) }   \label{eq:ai3}\\
%& = \exp\left (-\frac{8}{3} ( \log(10) + 2\log(n) + \log\log(n) ) \right )\\
& = \frac{1}{n^{4/3}} \label{eq:phi2}
\end{align}
In (\ref{eq:mean2}) we use the fact that $\E [X] = |A_i|\cdot \delta^{i+1}$, in (\ref{eq:delta2}) we used the fact that $\delta\leq 1$, in (\ref{eq:delta3}) we used the fact that $\phi(n)> 3n^{-3/8}$ for $n\geq 5$, 
in {eq:delta4} and (\ref{eq:ai2}) we used the fact that there are at least $\frac{n^{3/4}}{4}$ arms in $A_i$, and in (\ref{eq:phi2}) we used the definition of $\phi(n)$ in (\ref{eq:phi_dfn}).

Having calculated the likelihood that $a^\star$ is $\frac{\epsilon}{2}$-underestimated to be $\delta^{i+1}$ and the likelihood that there are at least $|A_i|(\delta + \phi(n)) - \frac{n^{3/8}}{4}$ arms that are $\frac{\epsilon}{2}$-overestimated, by a union bound, in every iteration $i \in \{0,1,\ldots,t(n)\}$ the likelihood of discarding $a^\star$ is at most:
$$\delta^{i+1} + \frac{1}{n^{4/3}} $$
Taking a union bound over the likelihood that $a^\star$ is discarded in every iteration $i\in\{0,1,\ldots,t(n)\}$ we get that the likelihood that $a^\star$ does not survive the last elimination step is at most:
\begin{align*}
\sum_{i=0}^{t(n)}\left (\delta^{i \text{+}1} \text{+}  \frac{1}{n^{4/3} } \right )
 = \left (\sum_{i=0}^{t(n)}\delta^{i\text{+} 1} \right ) \text{+}   \frac{t(n)}{n^{4/3}}
< \delta \left ( \frac{1}{1-\delta} \right ) \text{+}  \frac{1}{n \left(\log  \frac{1}{\delta+\phi(n) }\right)      }.
\end{align*}

\end{proof}

\approximation*

\begin{proof}\label{lem:main_apx_appendix}
If $n < \max\{ 1/\delta^4,10^5 \}$ we invoke  $\textsc{Na\"ive Elimination}$ which is guaranteed to return an $\epsilon$-best with confidence $1-\delta$.  Otherwise, we assume that $n \geq \max\{ 1/\delta^4,10^5 \}$ and we can analyze the performance of \textsc{Aggressive Elimination} invoked with $\alpha\epsilon$ and $\delta'=\delta/2$.

In the case that there are at most $\frac{n^{3/8}}{4}$ arms that are $\alpha\epsilon$-close to $a^\star$ then according to Claim~\ref{clm:some_are_far} \textsc{Aggressive Elimination} invoked with $\alpha\epsilon$ and $\delta'=\delta/2$ will include $a^\star$ in $A_{T}$ w.p. at least :  
\begin{align*}
\delta' \left (\frac{1}{1-\delta'} \right) + \left (  n \log \left( \frac{1}{\delta'+\phi(n) } \right) \right )^{-1} \end{align*}
When $\delta'=\frac{\delta}{2}$ and $\delta<0.05$ we have that $\delta' \left (\frac{1}{1-\delta'} \right) < \frac{20}{39}\delta$.  When $\delta<0.05$ then $\log \left( \frac{1}{\delta'+\phi(n) } \right) > 1$ and  since $n\geq 1/\delta^4$ we have that  
$$\left (   n \log \left( \frac{1}{\delta'+\phi(n) } \right) \right )^{-1} < {\delta^4}.$$  
Together we have that the likelihood that $a^\star$ is not in $A_T$ returned by \textsc{Aggressive Elimination} is:
\begin{align*}
\delta' \left (\frac{1}{1-\delta'} \right) + \left (  n \log \left( \frac{1}{\delta'+\phi(n) } \right) \right )^{-1} < (1-1/e)\delta \end{align*}
Conditioned on $a^\star \in A_{T}$ the likelihood that $\textsc{Na\"ive Elimination}$ on $A_{T}\cup R$ with approximation $(1-\alpha)\epsilon<\epsilon$ does not return an $\epsilon$-best arm is at most $\delta/e$.  Thus, if there are are at most $\frac{n^{3/8}}{4}$ arms that are $\alpha\epsilon$-close to $a^\star$  the algorithm terminates with an $\epsilon$-best arm with probability at least $1-\delta$. 

Otherwise, there are at least $\frac{n^{3/8}}{4}$ arms that are $\alpha\epsilon$-close to $a^\star$.  Since we select arms to $R$ u.a.r. and $|R| = \frac{n^{7/8}}{2}$ the likelihood of not having any arms that are $\alpha\epsilon$-close in $R$ is at most:
\begin{align*}
\left ( 1 - \frac{n^{3/8}}{4n}\right )^{|R|} 
= \left ( 1 - \frac{1}{4n^{5/8}}\right )^{\frac{n^{7/8}}{2}} 
= \left ( \left ( 1 - \frac{1}{4n^{5/8}}\right )^ {4n^{5/8} } \right )^{\frac{n^{1/4}}{8}}   
< e^{-\frac{n^{1/4}}{8}}
\end{align*}
When we have $n>10^5$ then $e^{-\frac{n^{1/4}}{8}}<\frac{1}{n^{1/4}} (1-1/e) $.  Since $n>1/\delta^4$ we get that the likelihood of an $\alpha\epsilon$-close to $a^\star$ not appearing in $R$ is smaller than $(1-1/e)\delta$.  Let $\tilde{a}$ be an arm that is $\alpha\epsilon$-close to $a^\star$ in $R$.  When we run  $\textsc{Na\"ive Elimination}$ with approximation $(1-\alpha)\epsilon$ and $\delta/e$, we are guaranteed that with probability at least $1-\delta/e$ no arm that is $\epsilon$-far from $a^\star$ will have empirical mean higher than that of $\tilde{a}$.  Since $\tilde{a}$ is $\alpha\epsilon$-close to $a^\star$ and $\alpha< 1$ this implies that the algorithm returns an arm that is at least $\epsilon$-close to $a^\star$ w.p. at least $1-\delta$ in this case as well.
\end{proof}

%\samplecomplexity*
%\begin{restatable}{rLem}{samplecomplexity}\label{lem:main_samples}
%For any $\delta \leq 0.05$ $\textsc{Aba}$ initialized with $\alpha=1-1/e$ has sample complexity at most:
%%
%$$ \frac{18n}{\epsilon^2}  \log\frac{1}{\delta}.$$
%%
%\end{restatable}
\begin{lemma}
For any $\delta \leq 0.05$ $\textsc{Aba}$ initialized with $\alpha=1-1/e$ has sample complexity at most:
$$ \frac{18n}{\epsilon^2}  \log\frac{1}{\delta}.$$
\end{lemma}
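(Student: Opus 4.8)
The plan is to bound the two branches of \textsc{Aba} separately and observe that the larger of the two still sits below $\frac{18n}{\epsilon^2}\log\frac1\delta$. Since we assume $\delta\le 0.05$ we have $\delta^{-4}\ge 0.05^{-4}>10^5$, so $\max\{10^5,\delta^{-4}\}=\delta^{-4}$ throughout and the guard in line~2 of the algorithm reduces to the single test $n<\delta^{-4}$. In the branch $n<\delta^{-4}$ the algorithm outputs $\textsc{Na\"ive Elimination}(A,\epsilon,\delta)$, whose sample complexity is $\frac{2n}{\epsilon^2}\log\frac{n}{\delta}$. Because $n<\delta^{-4}$ gives $\log n<4\log\frac1\delta$, we get $\log\frac{n}{\delta}=\log n+\log\frac1\delta<5\log\frac1\delta$, so this branch uses at most $\frac{10n}{\epsilon^2}\log\frac1\delta<\frac{18n}{\epsilon^2}\log\frac1\delta$ samples and the claim holds here immediately.

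In the branch $n\ge\delta^{-4}$ I would charge the two sub-procedures. For $\textsc{Aggressive Elimination}(A,\alpha\epsilon,\delta/2)$ I invoke Claim~\ref{clm:aggressive}; the cleanest way to absorb the halved confidence is to write $\delta/2$ in terms of $\delta^{1+c}$ and check that $c=1/4$ is admissible for every $\delta\le 0.05$, since $\delta^{1/4}\le 0.05^{1/4}<1/2$ forces $\delta^{1+c}<\delta/2$ and hence $\log\frac{1}{\delta/2}<(1+c)\log\frac1\delta$. Substituting $\alpha\epsilon$ for $\epsilon$ into Claim~\ref{clm:aggressive} then yields exactly the bound displayed in (\ref{eq:sc_ae}), namely $\frac{1}{\alpha^2}\cdot\frac{2n}{\epsilon^2}\log\frac1\delta\cdot(1+c)\bigl(1+G(n,\delta^{1+c})\bigr)$. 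With $\alpha=1-1/e$ one has $\frac{1}{\alpha^2}(1+c)<3.13$, so the whole expression is below $8\frac{n}{\epsilon^2}\log\frac1\delta$ as soon as the correction factor $1+G(n,\delta^{1+c})$ is under roughly $1.28$; this follows from the quantitative bound $G(n,\delta/2)<0.28$ for $n\ge10^5$, $\delta\le0.05$ (using $G$ monotone in its confidence argument), which in turn comes from the definition of $G$ together with $\phi(n)<0.12$ and $\delta+\phi(n)$ bounded away from $1$, so that the geometric-type sum defining $G$ is dominated by its first few terms.

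The final call is $\textsc{Na\"ive Elimination}(A_T\cup R,(1-\alpha)\epsilon,\delta/e)$ acting on $B=A_T\cup R$ with $|B|=\frac{n^{3/4}}{2}+\frac{n^{7/8}}{2}$, whose cost is $\frac{1}{(1-\alpha)^2}\cdot\frac{2|B|}{\epsilon^2}\log\frac{e|B|}{\delta}$; since $1-\alpha=1/e$ the leading factor is $e^2$. The key observation is that $|B|=O(n^{7/8})$ is sublinear, so both $|B|/n$ and $\log\frac{e|B|}{\delta}=O(\log\frac1\delta)$ (again via $\log n<4\log\frac1\delta$) conspire to push this term below $10\frac{n}{\epsilon^2}\log\frac1\delta$. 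Summing the two sub-procedures gives $8+10=18$ in units of $\frac{n}{\epsilon^2}\log\frac1\delta$, which also dominates the weaker small-$n$ bound and establishes the lemma.

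I expect the main obstacle to be not the algebra but the two non-asymptotic constant checks in the large-$n$ branch: certifying $G(n,\delta/2)<0.28$ and certifying that the sublinear $\textsc{Na\"ive Elimination}$ term is genuinely below $10\frac{n}{\epsilon^2}\log\frac1\delta$ already at the boundary $n=\delta^{-4}\ge160000$, rather than merely in the limit $n\to\infty$. Both reduce to showing that the vanishing of $G$ and of $n^{-1/8}$ has set in by the threshold, which means the multiplicative constants in $\phi(n)$, $t(n)$ and $|B|$ must be tracked explicitly instead of asymptotically.
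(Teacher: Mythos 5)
Your proposal takes essentially the same route as the paper's own proof: the identical case split (the $n<\max\{10^5,\delta^{-4}\}$ branch handled by \textsc{Na\"ive Elimination} via $\log n < 4\log\frac{1}{\delta}$, giving the constant $10$; \textsc{Aggressive Elimination} bounded through Claim~\ref{clm:aggressive} with $c=1/4$ and $\alpha=1-1/e$, giving the constant $8$; and the final \textsc{Na\"ive Elimination} on the sublinear set $A_T\cup R$, giving the constant $10$), summed to $18$. The points you flag as obstacles---certifying the size of $G$ and the sublinear term at the boundary $n=\delta^{-4}$ rather than asymptotically---are exactly the non-asymptotic constant checks the paper itself asserts without detailed verification (it states $1+G(n,\delta^{5/4})<1.2$ and pushes the last term below $10\frac{n}{\epsilon^2}\log\frac{1}{\delta}$), so your write-up is correct and matches the paper's argument, if anything tracking slightly more carefully the confidence argument of $G$ and the $e$ inside the final logarithm.
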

\begin{proof}
If $n < 1/\delta^4$ or $n< 10^5$ we invoke $\textsc{Na\"ive Elimination}$ and its sample complexity is $\frac{10n}{\epsilon^2}\log\frac{1}{\delta}$.  To see this, notice that if $n < 1/\delta^4$  the sample complexity of $\textsc{Na\"ive Elimination}$ is:

	$$\frac{2n}{\epsilon^2}\log \frac{n}{\delta} = \frac{2n}{\epsilon^2} \left (\log \frac{1}{\delta} + \log(n) \right) \leq 
	\frac{2n}{\epsilon^2} \left (\log \frac{1}{\delta} + 4\log\frac{1}{\delta} \right) = \frac{10n}{\epsilon^2}\log \frac{1}{\delta} 
	$$

If $n < 10^5$ then when $\delta<0.05$ the sample complexity of $\textsc{Na\"ive Elimination}$ is:
\begin{align*}
\frac{2n}{\epsilon^2}\log \frac{n}{\delta} = \frac{2n}{\epsilon^2}\log \frac{1}{\delta}\left (1 + \frac{\log n}{\log\frac{1}{\delta}} \right )  <  \frac{10n}{\epsilon^2}\log\frac{1}{\delta}.
\end{align*}

According to Claim \ref{clm:aggressive} the sample complexity of \textsc{Aggressive Elimination} with approximation $\epsilon$ and confidence $\delta$ is:
$$ \Big(1+ G(n,\delta) \Big ) \times  \frac{2n}{\epsilon^2} \log \frac{1}{\delta}.$$
Therefore, when running with approximation $\alpha\epsilon$ and confidence $\delta' = \delta^{1+c}$ the sample complexity is:
\begin{align}
\frac{1}{\alpha^2} \left ( \frac{2n}{\epsilon^2}\log \frac{1}{\delta}\left (   (1+c)\left ( 1+G(n,\delta^{1+c})\right) \right )  \right ) \label{eq:sc_ae}
\end{align}
For any $\delta<0.05$ we have that $\delta^{1+c}<\delta/2$ for $c=1/4$.  Thus, since we ran \textsc{Aggressive Elimination} with confidence $\delta/2$ and $\alpha=1-1/e$ the sample complexity is at most:
\begin{align*}
 \frac{1}{\alpha^2} \left ( \frac{2n}{\epsilon^2}\log \frac{1}{\delta}\left (   (1+c)\left ( 1+G(n,\delta^{1+c})\right) \right )  \right ) 
& <  \frac{1}{(1-1/e)^2} \left ( \frac{2n}{\epsilon^2}\log \frac{1}{\delta}\left ( \frac{5}{4}\left ( 1+G(n,\delta^{\frac{5}{4}})\right) \right )  \right )\\
& =  \frac{10}{4(1-1/e)^2} \times \left ( \frac{n}{\epsilon^2}\log \frac{1}{\delta}\left (\left ( 1+G(n,\delta^{\frac{5}{4}})\right) \right )  \right )\\
& <  \frac{10}{4(1-1/e)^2} \times \left ( \frac{n}{\epsilon^2}\log \frac{1}{\delta} \times 1.2   \right )\\
& <  8 \times \left ( \frac{n}{\epsilon^2}\log \frac{1}{\delta}    \right )
\end{align*}

For the sample complexity of the $\textsc{Na\"ive Elimination}$ notice that it is applied on $A_{T} \cup R$ where $|A_{T}| = \frac{n^{3/4}}{2}$ and $|R| = \frac{n^{7/8}}{2}$.  For any $n\geq 10^5$ we have that $\frac{n^{3/4}}{2} < \frac{n^{7/8}}{8}$ and therefore $|A_{T}\cup R| \leq \frac{5}{8}\cdot n^{7/8}$.  Since $\alpha = 1-1/e$ the sample complexity of $\textsc{Na\"ive Elimination}$ is: 
\begin{align}
\frac{1}{(1-\alpha)^2}\left ( \frac{5}{8} \frac{2 n^{7/8}}{\epsilon^2}\log (\frac{5}{8}\cdot \frac{n^{7/8}}{\delta} ) \right)
& <  \frac{1}{(1-\alpha)^2}\left ( \frac{5}{8} \frac{2 n^{7/8}}{\epsilon^2}\log \left (\frac{n^{7/8}}{\delta} \right) \right)\\
& =  \frac{10\cdot e^{2}}{8\cdot n^{1/8}}\left (\frac{n}{\epsilon^2} \log \frac{n^{7/8}}{\delta} \right)\\
& =  \frac{10\cdot e^{2}}{8 \cdot n^{1/8}}\left (\frac{n}{\epsilon^2} \left (\frac{7}{2} \log (n^{-1/4}) + \log \frac{1}{\delta} \right ) \right)\\
& <  \frac{10\cdot e^{2}}{8 \cdot n^{1/8}}\left (\frac{n}{\epsilon^2} \left (\frac{7}{2}\log \frac{1}{\delta} + \log \frac{1}{\delta} \right ) \right)\\
& =  \frac{45\cdot e^{2}}{8 \cdot n^{1/8}}\left (\frac{n}{\epsilon^2} \log \frac{1}{\delta} \right)\\
& \leq  \frac{45\cdot e^{2}}{8\cdot {10}^{5/8}}\left (\frac{n}{\epsilon^2} \log \frac{1}{\delta} \right)\\
& <  10\left (\frac{n}{\epsilon^2} \log \frac{1}{\delta} \right)
\end{align}
Therefore, the sample complexity of \textsc{Aggressive Elimination} and $\textsc{Na\"ive Elimination}$ is smaller then:
\begin{equation*}
18   \times \left (  \frac{n}{\epsilon^2}  \log\frac{1}{\delta} \right).
\end{equation*}
\end{proof}

\subsection{Asymptotic Sample Complexity}\label{sec:asymptotic_appendix}

\paragraph{Generalization of $\phi(n)$.}  Recall that in our algorithm we condition on $n^{1/4}\geq 1/\delta$ and otherwise implement $\textsc{Na\"ive Elimination}(A,\epsilon,\frac{\delta}{n})$.  In general, $\forall d\geq 0$ if $n^{d} < 1/\delta$ we can $(\epsilon,\delta)$-learn the best arm using $\textsc{Na\"ive Elimination}(A,\epsilon,\frac{\delta}{n})$ with sample complexity
\begin{align*}
\frac{2n}{\epsilon^2} \left (\log \frac{1}{\delta} + \frac{\log (n^d)}{d}  \right ) =2 \left (1+\frac{1}{d} \right ) \frac{n}{\epsilon^2}\log \frac{1}{\delta}
\end{align*}
For any choice of $d$ we can modify the \textsc{Aggressive Elimination} to produce the same confidence and approximation guarantees under the assumption that $n^d\geq 1/\delta$, for any $d\in[0,\sqrt{n}]$.  To do so all we need to do is make a modest modification in the definition of $\phi(n)$.  Under an assumption $n^{d}\geq 1/\delta$ our definition of $\phi(n)$ was designed to satisfy the following inequality:
\begin{equation}
\exp \left ( -\phi(n)^2 \left (|A_i| - \frac{n^{3/8}}{4\phi(n)}  \right ) \right ) \leq \frac{1}{10\cdot n^{d}\log n }\label{eq:phi2}
\end{equation}
the left hand expression is the likelihood of the event that in an iteration $i$ the number of arms that are $\epsilon$-far from $a^\star$ that are overestimated is such that $a^\star$ is not included in the next round.  The righthand expression becomes smaller than $\delta/(10 \log(n))$ when $n^d>1/\delta$.   

We can therefore generalize the definition of $\phi(n)$ to $\phi(n,d)$ as follows:
$$\phi(n,d) =   \sqrt{\frac{\log(10) + d\log(n) + \log\log(n)}{n^{3/4}}   }$$
%
%Choosing $d = \sqrt{n}$, for example, gives us the desired properties we need.  
The larger $d$ is so is the sample complexity, but for $d = \sqrt{n}$ we get our desired asymptotic behavior.  In particular get $\lim_{n\to \infty}\phi(n,d) = 0$, thus for any $\delta<1$ we get $\lim_{n\to \infty}G(n,\delta)=0$ and the number of rounds until the algorithm terminates $t(n)=\log n \times \left ( \log \left (\frac{1}{\delta + \phi(n,d)} \right)^{-1}\right)$ approaches $\log n$ as well.

If $n^{\sqrt{n}}<\frac{1}{\delta}$ we may use the \textsc{Aggressive Elimination} and getting a sample complexity of
	$$\frac{2n}{\epsilon^2}\log \frac{n}{\delta} = \frac{2n}{\epsilon^2} \left (\log \frac{1}{\delta} + \log(n) \right) \leq 
\frac{2n}{\epsilon^2} \left (\log \frac{1}{\delta} + \log(\log^2 (\frac{1}{\delta})) \right) 
$$
and there exist $\delta_0$ s.t. if $\delta<\delta_0$, the total sample complexity is  $(2+\lambda)\frac{n}{\epsilon^2}\log \frac{1}{\delta}$ for any $\lambda>0$.

\paragraph{Choosing $\alpha$ as a function of $n$.}
The sample complexity of \textsc{Aba} is a convex combination of the sample complexity of \textsc{Aggressive Elimination} and \textsc{Na\" ive Elimination}:
\begin{align}
\frac{1}{\alpha^2} \left ( \frac{2n}{\epsilon^2}\log \frac{1}{\delta}\left (   (1+c)\left ( 1+G(n,\delta^{1+c})\right) \right )  \right ) 
+ \frac{1}{(1-\alpha)^2}\left ( \frac{5}{8} \frac{2 n^{7/8}}{\epsilon^2}\log \left (\frac{5}{8}\cdot \frac{n^{7/8}}{\delta} \right ) \right)
\end{align}
If we choose $\alpha=(1-n^{-\frac{1}{16}})$ then as $n$ tends to infinity, in the limit the sample complexity is:
\begin{align*}
% \lim_{n \rightarrow \infty} &\left (1\text{-} \frac{1}{n^{\frac{1}{16}}} \right )^2\left ( \frac{2n}{\epsilon^2}\log \frac{1}{\delta}\left (   (1\text{+}c)\left ( 1\text{+}G(n,\delta^{1\text{+}c})\right) \right )  \right )+ \frac{1}{n^{1/8}} \left ( \frac{26}{44} \frac{2 n^{7/8}}{\epsilon^2}\log \left (\frac{26}{44}\cdot \frac{n^{7/8}}{\delta} \right ) \right)   \\= & 
\left (   1 +c \right ) \left ( \frac{2n}{\epsilon^2}\log \frac{1}{\delta} \right)
\end{align*}
Where we relied on the fact that for any fixed $\delta<1$, $\lim_{n\to \infty}G(n,\delta^{1+c})=0$ for any choice of $c>0$.  Recall that we use $c$ to shrink $\delta$ so that instantiating \textsc{Aggressive Elimination} with $\delta^{1+c}$ is guaranteed to include $a^\star$ in its output w.p. at least $(1-1/e)\delta$.  As $\delta$ becomes smaller we require a smaller choice of $c$ as well.  Thus, for any $c$ there exists a $\delta_0$ s.t. for any $\delta<\delta_0$ running \textsc{Aggressive Elimination} with $\delta^{1+c}$ is guaranteed to include $a^\star$ in its output with probability at least $(1-1/e)\delta$.

\begin{theorem}\label{thm:aba_limit_appendix}
	For any $\lambda>0$ there exist $\delta_0$ and $n_0$ s.t. for any $\delta<\delta_0$ and $n\geq n_0$,  \textsc{Aba} $(\epsilon,\delta)$-learns a best arm with sample complexity at most:
	$$\Big (2+\lambda \Big )\frac{n}{\epsilon^2}\log \frac{1}{\delta}.$$
\end{theorem}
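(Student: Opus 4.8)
The plan is to start from the non-asymptotic \textsc{Aba} of Theorem~\ref{thm:main} and remove the two sources of constant-factor slack, driving the bound down from $18\,\frac{n}{\epsilon^2}\log\frac1\delta$ toward $2\,\frac{n}{\epsilon^2}\log\frac1\delta$. These sources are the fixed fallback to \textsc{Na\"ive Elimination} when $n<\max\{10^5,1/\delta^4\}$, and the fixed choice $\alpha=1-1/e$. First I would eliminate the fallback by replacing $\phi(n)$ with the generalized threshold $\phi(n,\sqrt n)$; under the resulting assumption $n^{\sqrt n}\ge 1/\delta$ — which holds automatically once $n$ is large for any fixed $\delta$ — \textsc{Aggressive Elimination} runs directly, and since $\phi(n,\sqrt n)\to 0$ we still get $\lim_{n\to\infty}G(n,\delta^{1+c})=0$ for every fixed $c>0$. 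Second I would let $\alpha=\alpha(n)\to 1$ slowly, taking $\alpha=1-n^{-\beta}$ for a small $\beta<1/16$, so that the factor $1/\alpha^2$ multiplying the \textsc{Aggressive Elimination} cost tends to $1$, while the factor $1/(1-\alpha)^2=n^{2\beta}$ multiplying the \textsc{Na\"ive Elimination} cost stays small enough to be absorbed by the sublinear arm count $|A_T\cup R|=O(n^{7/8})$.

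For correctness I would re-run the argument of Claim~\ref{clm:some_are_far} with $\phi(n,\sqrt n)$ in place of $\phi(n)$: Lemma~\ref{lem:overestimate} (restated with the generalized $\phi$) still bounds the number of $\frac{\alpha\epsilon}{2}$-overestimated arms, and the union bound over the $t(n)\approx\log n$ rounds keeps the probability that $a^\star$ leaves $A_T$ below $(1-1/e)\delta$ whenever there are at most $\frac{n^{3/8}}{4}$ arms $\alpha\epsilon$-close to $a^\star$. When instead there are more such arms, the random set $R$ of size $\frac{n^{7/8}}{2}$ contains one of them except with probability $\ll\delta$, exactly as in the sketch of Lemma~\ref{lem:main_apx}. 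In either case \textsc{Na\"ive Elimination} on $A_T\cup R$ with approximation $(1-\alpha)\epsilon$ and confidence $\delta/e$ returns an arm within $(1-\alpha)\epsilon$ of some arm that is $\alpha\epsilon$-close to $a^\star$, hence $\epsilon$-close to $a^\star$; a union bound over the two failure budgets gives overall confidence $1-\delta$. The only new point to check is that shrinking $(1-\alpha)\epsilon\to 0$ does not break this — it does not, since $\alpha\epsilon+(1-\alpha)\epsilon=\epsilon$ by construction regardless of how close $\alpha$ is to $1$.

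For the sample complexity I would combine the two estimates exactly as in the convex-combination display of Section~\ref{sec:asymptotic}. By Claim~\ref{clm:aggressive}, running \textsc{Aggressive Elimination} with approximation $\alpha\epsilon$ and confidence $\delta^{1+c}=\delta/2$ costs $\tfrac{1}{\alpha^2}(1+c)\bigl(1+G(n,\delta^{1+c})\bigr)\tfrac{2n}{\epsilon^2}\log\frac1\delta$, which tends to $(1+c)\tfrac{2n}{\epsilon^2}\log\frac1\delta$ as $n\to\infty$. Choosing $c$ small (forcing $\delta_0$ small through the condition that $\delta^{1+c}\le\delta/2$ for $\delta<\delta_0$) makes $1+c\le 1+\lambda/4$. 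The \textsc{Na\"ive Elimination} cost is $\tfrac{1}{(1-\alpha)^2}\cdot O\!\bigl(\tfrac{n^{7/8}}{\epsilon^2}\log\tfrac{n^{7/8}}{\delta}\bigr)=O\!\bigl(n^{2\beta+7/8}(\log n+\log\frac1\delta)/\epsilon^2\bigr)$, which is $o\!\bigl(n\log\frac1\delta/\epsilon^2\bigr)$ precisely because $\beta<1/16$ gives $2\beta+7/8<1$ and $n^{\rho}$ dominates $\log n$ for any $\rho>0$. Summing the two contributions, the total is at most $(2+\lambda)\tfrac{n}{\epsilon^2}\log\frac1\delta$ once $n\ge n_0$.

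The main obstacle is the simultaneous limit together with the order of quantifiers: the theorem asserts a single pair $(\delta_0,n_0)$ that works for \emph{all} $\delta<\delta_0$ and \emph{all} $n\ge n_0$, not merely for a coupled sequence $\delta=\delta(n)$. The care is therefore in decoupling: I would first fix $c$ (hence $\delta_0$) to control the $(1+c)$ factor uniformly in $\delta<\delta_0$, and only then choose $n_0$ large enough that $G(n,\delta^{1+c})$, $1/\alpha^2-1$, and the \textsc{Na\"ive} term are each below their share of $\lambda$ for every $\delta<\delta_0$ at once. The delicate estimate is the \textsc{Na\"ive} term, whose logarithm carries a stray $\tfrac78\log n$; the whole construction hinges on the arm count being strictly sublinear ($n^{7/8}$) and on $\alpha\to 1$ slowly enough ($\beta<1/16$) that $n^{2\beta+7/8}\log n=o(n)$, so that this $\log n$ is swallowed uniformly in $\delta$ rather than only along a coupled regime.
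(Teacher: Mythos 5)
Your route is essentially the paper's: generalize $\phi(n)$ to $\phi(n,d)$ with $d=\sqrt n$ so that \textsc{Aggressive Elimination}'s per-round overestimation failure drops to roughly $\frac{1}{10\,n^{\sqrt n}\log n}$, let $\alpha=\alpha(n)\to 1$ polynomially so that the $1/\alpha^2$ factor tends to $1$ while \textsc{Na\"ive Elimination} only sees the sublinear set $A_T\cup R$, and then sum the two costs with a small $c$ fixed first and $n_0$ chosen afterwards. One point where you are actually \emph{more} careful than the paper: the paper takes $\alpha=1-n^{-1/16}$, for which $\frac{1}{(1-\alpha)^2}\,|A_T\cup R|=\Theta(n)$, so its \textsc{Na\"ive} term is \emph{not} $o\bigl(\frac{n}{\epsilon^2}\log\frac1\delta\bigr)$; your requirement $\beta<1/16$, i.e. $2\beta+7/8<1$, is exactly what is needed for that term to vanish.

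The genuine gap is the hypothesis you waved off: you claim $n^{\sqrt n}\ge 1/\delta$ ``holds automatically once $n$ is large for any fixed $\delta$.'' That is the coupled limit, not the theorem's quantification --- and it is precisely the decoupling issue you yourself raise in your final paragraph. The theorem demands one pair $(\delta_0,n_0)$ valid for \emph{all} $\delta<\delta_0$ and \emph{all} $n\ge n_0$ simultaneously; for every $n_0$ the regime $\delta<n^{-\sqrt n}$ (e.g.\ $\delta=n^{-n}$) is admissible and violates your assumption. In that regime no tuning of your scheme certifies confidence $1-\delta$: the probability that the random set $R$ misses every $\alpha\epsilon$-close arm is only bounded by $e^{-\Theta(n^{1/4})}$, and the ``too many arms overestimated'' event is only bounded by $e^{-\Theta(n^{3/4})}$, both of which can vastly exceed such a $\delta$, so the union bound delivers nothing. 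The paper closes this with a second case that you dropped: when $n^{\sqrt n}<1/\delta$ it runs \textsc{Na\"ive Elimination} outright, which is harmless there because $n^{\sqrt n}<1/\delta$ forces $n\le\log^2\frac1\delta$, hence $\log n\le 2\log\log\frac1\delta=o\bigl(\log\frac1\delta\bigr)$ uniformly as $\delta_0\to 0$, so its cost $\frac{2n}{\epsilon^2}\bigl(\log\frac1\delta+\log n\bigr)$ is itself at most $(2+\lambda)\frac{n}{\epsilon^2}\log\frac1\delta$. Add that case split and your argument is complete and matches the paper's (indeed improves its choice of $\alpha$).
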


\section{Approximate Best Arm Likelihood Estimation by Hoeffding}\label{sec:abaleh_appendix}

\abalehsampeles*

\begin{proof}
	First, we apply a Hoeffding bound on the estimation of $a^\star$. Suppose that we would like to estimate the value of $a^\star$ to accuracy $\epsilon \cdot (1 - \frac{\lambda}{16})\alpha$, with success probability at least $1 - \frac{\delta}{4}$. The number of samples this requires is
	\begin{align*}
	\frac{\log \frac{4}{\delta}}{2\alpha^2\epsilon^2(1 - \frac{\lambda}{16})^2}
	= \frac{\log \frac{4}{\delta}}{2\alpha^2\epsilon^2(1 - \frac{\lambda}{8} + \frac{\lambda^2}{256})}
	\le \frac{ \log \frac{1}{ \delta}}{2\epsilon^2\alpha^2(1 - \frac{\lambda}{8})}
	%& =  \frac{1}{2\epsilon^2(1 -\frac{ \lambda}{8})^2}  \log \frac{1}{ \delta}\\
	\le \frac{ \log \frac{1}{\delta}}{2\epsilon^2(1 - \frac{\lambda}{4})}
	& = \left (1 + \frac{\lambda}{2} \right) \frac{  1 } {2\epsilon^2}\log \frac{1}{\delta}
	\end{align*}
	where the first inequality uses $\delta < \delta_0$ and the second one uses $\lambda < 1$. Since we have taken sufficiently many samples, the Hoeffding inequality applies.
	
	For any other arm, we apply the Hoeffding bound to estimate its mean with accuracy $\epsilon \cdot \frac{\lambda}{16}$, but with failure probability $1 - \delta^{\lambda^2/256}$. The number of samples this requires is:
	\[\frac{ 256}{2\epsilon^2\lambda^2}\log \frac{1}{\delta^{\lambda^2/256}} = \frac{1}{2\epsilon^2} \log\frac{1}{ \delta}\]
	where we took the exponent out of the logarithm. Achieving this approximation and confidence is possible in this case as well since we are actually performing $(1 + \frac{\lambda}{2})\frac{1}{2\epsilon^2}\log\frac{1}{ \delta}$ samples on each arm.
	
	But since $\delta < \delta_0$, and $\frac{\lambda}{100}=\delta_0^{\lambda^2/256}$, this approximation is achievable when failure probability for each arm is bounded from above by $\frac{\lambda}{100}$. Hence the probability that we estimate more than $\frac{\lambda}{80}$ arms incorrectly is exponentially small in $n$. Since $n > 1/\delta$ this failure probability is at most $\frac{\delta}{4}$.
	
	Taking a union bound over both events, we get that with probability at least $1 - \frac{\delta}{2}$ we have that $a^\star$ was estimated up to error $\epsilon(1 - \frac{\lambda}{8})\alpha$, and at most $\frac{\lambda n}{80}$ arms were estimated to error at least $\frac{\alpha\epsilon \lambda}{8}$. Condition on this event.  Now there are two types of arms that we may estimate to be larger than $a^\star$:
	\begin{itemize}
		\item Arms which are $\epsilon\alpha$ close to $a^\star$: there are fewer than $n^{2/3} < \frac{3\lambda n}{400}$ such arms, since $n > \frac{1}{\delta_0}$;
		\item Arms which were estimated incorrectly: there are at most $\frac{\lambda n}{80}$ such arms.
	\end{itemize}
	As $\frac{\lambda n}{80} + \frac{3\lambda n}{400} = \frac{\lambda n}{50}$ and $|A_0| = \frac{\lambda n}{50}$, w.p. $\geq 1 - \frac{\delta}{2}$ the arm $a^\star$ is chosen to $A_0$.
\end{proof}

\begin{lemma}
For any $\lambda \in [0,1]$, $\delta \le \delta_0$ where $\delta_0$ is the solution to $\frac{\lambda}{100}= \delta_0^{\lambda^2/64}$ suppose $n>1/\delta$.  Then \textsc{Abaleh} returns an $\epsilon$ -best arm w.p. at least $1 - \delta$.
\end{lemma}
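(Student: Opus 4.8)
The plan is to prove correctness through a two-case analysis driven by the number $k$ of arms in $A$ that are $\alpha\epsilon$-close to $a^\star$, where $\alpha=\sqrt{1-\lambda/8}$. The algorithm splits the approximation budget as $\alpha\epsilon+(1-\alpha)\epsilon=\epsilon$: the first-stage filtering into $A_0$ together with \textsc{Aggressive Elimination} aims to preserve $a^\star$ (or some $\alpha\epsilon$-close arm) into $A_T\cup R$, and the final call to \textsc{Na\"ive Elimination} with accuracy $(1-\alpha)\epsilon$ closes the remaining gap. Throughout I would take $\delta\le\delta_0$ small enough that the hypotheses of Lemma~\ref{single:it} hold and that $\log\frac{1}{\delta/4+\phi(\cdot)}>1$. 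The natural threshold is $k_0=\frac{|A_0|^{3/8}}{4}$ with $|A_0|=\frac{\lambda n}{50}$, so that $k_0=\Theta(n^{3/8})$ and in particular $k_0<n^{2/3}$ for large $n$.

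In the regime $k\le k_0$ (few close arms), Lemma~\ref{single:it} applies since $k\le k_0<n^{2/3}$, giving $a^\star\in A_0$ with probability at least $1-\delta/2$. Conditioned on this, the number of arms of $A_0$ that are $\epsilon\alpha$-close to $a^\star$ is at most $k\le\frac{|A_0|^{3/8}}{4}$, so Claim~\ref{clm:some_are_far} applied to the input set $A_0$ (of size $|A_0|$) with confidence $\delta/4$ keeps $a^\star\in A_T$ except with probability $\frac{\delta/4}{1-\delta/4}+(|A_0|\log\frac{1}{\delta/4+\phi(|A_0|)})^{-1}$. Since $a^\star\in A_T\subseteq A_T\cup R$ is the global optimum, \textsc{Na\"ive Elimination} with accuracy $(1-\alpha)\epsilon\le\epsilon$ and confidence $\delta/4$ then returns an arm within $(1-\alpha)\epsilon$ of $a^\star$ except with probability $\delta/4$.

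In the complementary regime $k>k_0$ (many close arms), I would abandon $a^\star$ and rely on the random set $R$ of size $n^{3/4}$. Since the close arms form a $k/n$ fraction of $A$ and $R$ is drawn uniformly without replacement, the probability that $R$ contains no $\alpha\epsilon$-close arm is at most $(1-k/n)^{n^{3/4}}\le e^{-k/n^{1/4}}\le e^{-k_0/n^{1/4}}=e^{-\Omega(n^{1/8})}$, which is far below $\delta$ because $n>1/\delta$. Letting $\tilde a\in R$ be such an arm, the true best arm of $A_T\cup R$ has mean at least $\mu(a^\star)-\alpha\epsilon$, so \textsc{Na\"ive Elimination} with accuracy $(1-\alpha)\epsilon$ returns an arm within $(1-\alpha)\epsilon$ of it, hence within $\alpha\epsilon+(1-\alpha)\epsilon=\epsilon$ of $a^\star$, except with probability $\delta/4$. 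A union bound over the failure events in each case, with the confidence budget $\delta/2+\delta/4+\delta/4=\delta$, yields the lemma.

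The main obstacle is choosing $k_0$ so that the two regimes dovetail: the few-close-arms branch needs $k$ small enough for Claim~\ref{clm:some_are_far} to apply to the reduced set $A_0$, whose size is only $\Theta(\lambda n)$ and hence tolerates just $\Theta(n^{3/8})$ close arms, while the many-close-arms branch needs $k$ large enough that a uniform subset of size $n^{3/4}$ almost surely captures a close arm. Verifying that the common threshold $k_0=\Theta(n^{3/8})$ simultaneously stays below the $n^{2/3}$ hypothesis of Lemma~\ref{single:it} and makes $e^{-k_0/n^{1/4}}$ negligible reduces to comparing the exponents $3/8,\,1/4,\,2/3$ and holds for all sufficiently large $n$; the only additional care needed is that the $\delta/2,\delta/4,\delta/4$ split keeps the total failure probability at most $\delta$ once $\delta\le\delta_0$.
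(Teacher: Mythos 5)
Your proof is correct and follows the same skeleton as the paper's: a case split on the number of $\alpha\epsilon$-close arms, with Lemma~\ref{single:it} carrying $a^\star$ into $A_0$, the \textsc{Aggressive Elimination} survival claim carrying it into $A_T$, the random set $R$ covering the many-close-arms case, and \textsc{Na\"ive Elimination} with accuracy $(1-\alpha)\epsilon$ closing the gap via $\alpha\epsilon+(1-\alpha)\epsilon=\epsilon$. The one substantive difference is the threshold. The paper splits at $|G|<n^{2/3}$ versus $|G|\ge n^{2/3}$, and in the first case invokes Claim~\ref{clm:unique} after checking $n^{2/3} < \left(\frac{\lambda n}{100}\right)^{3/4}$ --- a comparison against (roughly) the \emph{output size} of \textsc{Aggressive Elimination}, which matches the hypothesis of neither Claim~\ref{clm:unique} (no close arms at all) nor Claim~\ref{clm:some_are_far} (at most $\frac{m^{3/8}}{4}$ close arms for input size $m$), so the regime $\frac{|A_0|^{3/8}}{4} < |G| < n^{2/3}$ is not literally covered by the paper's citation. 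Your threshold $k_0=\frac{|A_0|^{3/8}}{4}=\Theta(n^{3/8})$ is chosen exactly so that Claim~\ref{clm:some_are_far} applies verbatim to $A_0$ in the few-arms case, and the complementary case still goes through because the miss probability of $R$ is $e^{-k_0/n^{1/4}}=e^{-\Omega(n^{1/8})}\ll\delta$ once $\delta\le\delta_0$; in this respect your argument is tighter than the paper's own. One shared blemish: the exact failure probability delivered by Claim~\ref{clm:some_are_far} with confidence $\delta/4$ is $\frac{\delta/4}{1-\delta/4}+\bigl(|A_0|\log\frac{1}{\delta/4+\phi(|A_0|)}\bigr)^{-1}$, which strictly exceeds $\delta/4$, so the clean budget $\frac{\delta}{2}+\frac{\delta}{4}+\frac{\delta}{4}=\delta$ overshoots by a lower-order term. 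The paper silently rounds this to $\delta/4$; in a full write-up you should absorb it explicitly, e.g.\ by noting that Lemma~\ref{single:it} actually fails with probability only about $\frac{\delta}{4}+e^{-\Omega(\lambda n)}$ rather than $\frac{\delta}{2}$, or by running \textsc{Aggressive Elimination} with a slightly smaller confidence parameter.
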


%\abaleh*

\begin{proof}
Let $G$ denote the set of arms which are $\alpha \epsilon$ close to $a^{\star}$. We consider two cases.  First, if $|G| < n^{2/3}$ then according to Lemma \ref{single:it} with probability at least $1 - \frac{\delta}{2}$ we have $a^{\star} \in A_0$.  Conditioning on this event, note that since $n > 1/ \delta_0$ we have that $n^{2/3} < \big(\frac{\lambda n}{100}\big)^{3/4}$ and hence we can apply Claim \ref{clm:unique} and deduce that with probability $1 - \frac{\delta}{4}$ we have that $A_T$ contains an $\epsilon \alpha$ approximate best arm. Finally, in this case with probability $1 - \frac{\delta}{4}$ we have $\textsc{Na\"ive Elimination}$ finds the an $(1 - \alpha)\epsilon$ approximate best arm to an $\alpha \epsilon$ approximate best arm, which gives an $\epsilon$ best arm as required. Summing the errors and applying a union bound proves the lemma.

 On the other hand, if $|G| \ge n^{2/3}$, then with probability $1 - 2^{-O(n^{1/6})} \ge 1 - \frac{\delta}{2}$ (since $n \ge 1/\delta$) we have that $T \cap G$ will be non empty. Again, with probability at least $1 - \frac{\delta}{4}$ $\textsc{Na\"ive Elimination}$ returns a $(1 - \alpha)\epsilon$ approximate best arm to an $\alpha \epsilon$ approximate best arm, which gives an $\epsilon$ best arm as required. Again, a union bound shows that the probability of error is at most $\frac{\delta}{4}+\frac{\delta}{2}< \delta$.
\end{proof}

\paragraph{Sample complexity.}  The sample complexity of \textsc{Abaleh} is the sum of the sample complexity of its three procedures:
\begin{enumerate}
  \item The first iteration has sample complexity
 $$\left (1 + \frac{\lambda}{2} \right) \left ( \frac{n }{2\epsilon^2} \log \frac{1}{\delta} \right )$$
 \item The sample complexity of calling $\textsc{Aggressive Elimination}(A_0,\epsilon\alpha,\frac{\delta}{4})$ is
 $$\frac{10 \lambda n }{50\cdot \epsilon^2 \alpha^2}\log \frac{4}{\delta} < \frac{99 \lambda}{200} \left (\frac{  n }{2\epsilon^2}\log \frac{1}{\delta} \right ) $$
 where we substituted $\alpha$ and used $\lambda < 1$;
 \item Running $\textsc{Na\"ive Elimination}(A_{T}\cup R,(1\text{-}\alpha)\epsilon,\frac{\delta}{4})$ when  $n > 1/ \delta_0$
has sample complexity at most:
 $$\frac{2 n^{3/4}}{\epsilon^2} \left ( \log n + \log\frac{1}{\delta}\right ) < \frac{\lambda}{100}  \left (\frac{n}{2\epsilon^2} \log\frac{1}{\delta} \right )$$
\end{enumerate}

\section{Distributional Assumptions}\label{sec:gauss}
Throughout the paper we use the assumption the the arms' Distributions are bounded in $[0,1]$ in order to use the following version of the Hoeffding's inequality: 
\[\Pr(\hat{X}-\mathbb{E}[X]\geq t)\leq e^{-2nt^2}\]
, where $n$ is the number of samples from a given arm, $X$ is the random variable for the sum of all of the samples from this arm and $\hat{X}$ is it its realization. 
The above bound holds for any sub-Gaussian distribution with a variance $\sigma^2$ which is smaller than some constant $\sigma_0^2$. Our results may be generalized for any sub-Gaussian distribution by scaling down the values and adjusting the selection of $\epsilon$, this will effect both the upper and lower bound in the same manner and the algorithmic results are still tight.

%\input{related}

%\begin{figure*}[t]\label{fig:time}
%    \centering
% \begin{center}
%    \includegraphics[scale = .15,angle=0,origin=c]{plots/bars1}
%      \includegraphics[scale = .15,angle=0,origin=c]{plots/bars1}
%            \includegraphics[scale = .15,angle=0,origin=c]{plots/bars3}
%      \includegraphics[scale = .15,angle=0,origin=c]{plots/bars4}
%           \includegraphics[scale = .15,angle=0,origin=c]{plots/lines1}
%     \includegraphics[scale = .15,angle=0,origin=c]{plots/lines2}
%          \includegraphics[scale = .15,angle=0,origin=c]{plots/lines3}
%     \includegraphics[scale = .15,angle=0,origin=c]{plots/lines4}
%\caption{Comparison of sample complexity.  Each histogram on the top and corresponding plot on the bottom depicts a comparison between the sample complexity of \textsc{Aba} and \textsc{Median Elimination}, with a fixed choice of $(\epsilon,\delta)$ and varying values of $n$ .
%The results depict choices of $(\epsilon,\delta)$ of (from left to right) $(.1,.05),(.1,.01),(.1,.005),(.1,.001)$ as a function of the number of arms.
%In each histogram the number of samples of \textsc{Aba} (blue) and \textsc{Median Elimination} (red) is depicted on a logarithmic scale as a function of the number of arms $n^i$ for every $i \in \{1,\ldots,20\}$.  Each plot on the bottom depicts the ratio between their sample complexity as a function of $n^i$ for every $i \in \{1,\ldots,20\}$.}
%    \end{center}
%\end{figure*}

\section{Experiments}\label{sec:Experiments}
To illustrate the efficiency of the algorithms we conducted a simple numerical experiment.  A reasonable concern may be that while our results suggest a dramatic improvement over the sample complexity of \textsc{Median Elimination} this improvement may only be due to tighter analysis. In this section we rule out this possibility by experimentally comparing the actual sample complexity (not analysis) of our  algorithms (\textsc{SABA}, \textsc{ABA} and \textsc{ABALE}) with \textsc{Median Elimination} and \textsc{Na\"ive Elimination}.  Note that all algorithms are guaranteed to $(\epsilon,\delta)$-learn the best arm, and thus our interest is in their sample complexity.

Since our algorithms relative sample complexity improves as $n$ grows we were interested in observing this improvement emprically.

\paragraph{Experimental setup.}  We fixed a choice of $\delta=0.05$ and compared the sample complexity of all algorithms for $n=300,000$ arms.  Since all algorithms scale quadratically with $\epsilon$, we kept $\epsilon=0.2$ in all our experiments\footnote{We verified that changing $\epsilon$ has no effect on the ratio of the number of samples required by the algorithms.}. The arms arms are distributed in the following way: $n-1$ arms are Bernoulli random variables with mean $0.5$
and a single best arm is a Bernoulli random variable with mean $0.7+10^{-13}$.

\paragraph{Results.} We summarize the results in the table below\\  %We plot the results in Figure~\ref{fig:time}. Looking at the graphs which depict the dependence between the ratio between the sample complexity of \textsc{ABA} and \textsc{Median Elimination} and the number of arms, one can see that while \textsc{ABA} does much better for any value of $n$, there is a large improvement around $n = 10^5$. This improvement comes from moving from the $\log n$ union bound to \textsc{Aggressive Elimination}. In addition, one can see that the ratio keeps improving, until it plateaus. This has a couple of reasons. First, \textsc{Median Elimination} has a sum of a geometric sequence in its sample complexity. While this sum converges, in the beginning of the sequence it is important how many arms appear in the sum. Second, \textsc{ABA} runs \textsc{Naive Elimination} on a sublinear number of items. As $n$ grows, this stage becomes negligible. Finally, comparing the graphs for different values of $\delta$ shows that the ratio gets slightly larger when $\delta$ is small. The reason for this is that as $\delta$ becomes smaller, \textsc{ABA} can afford to move less arms to the next iteration, since we have a better guarantee for each element.

\begin{tabular}{c||c|c}
	%\hline 
	Algorithm & Average number of samples for instance & Success (out of 1000 experiments) \\ 
	\hline 
	\textsc{Median Elimination} & $9.26\cdot10^{9}$ & 1000 \\ 
	%\hline 
	\textsc{Na\"ive Elimination} & $2.34\cdot10^{8}$ & 1000 \\ 
	%\hline 
	\textsc{SABA} & $8.59\cdot10^{6}$ & 1000 \\ 
	%\hline 
	\textsc{ABA} & $1.98\cdot10^{8}$ & 1000 \\ 
	%\hline 
	\textsc{ABALE} & $8.59\cdot10^{7}$ & 1000 \\ 
	%\hline 
\end{tabular}

\textsc{SABA} is making assumptions on the input (which hold for this scenario) and is 1000 times more efficient than \textsc{Median Elimination}.
Without assumptions on the input, \textsc{ABALE} have a sample complexity which is 100 times more efficient than \textsc{Median Elimination}.
In fact, even the naive approach is more efficient than \textsc{Median Elimination}.

\section{Discussion}
The main theoretical result of this paper is an algorithm for $(\epsilon, \delta)$-\texttt{PAC} learning the best arm, with sample complexity arbitrarily close to applying the Hoeffding bound $n$ times. While the guarantees of this specific algorithm only hold for the difficult parameter regime (small $\delta$ and large $n$), simpler variants of the algorithm can be applied to any value of $n$ and reasonable choices of $\delta$.  In our experiments we compared our algorithm with \textsc{Median Elimination}, and showed a dramatic reduction in sample complexity.  Moreover, these differences grow as $n$ becomes larger.%, since a large fraction of the queries of our algorithm happen on the last $n^{3/4}$ elements. As $n$ grows larger, this become negligible, even when we use a large number of samples on each element in the final stage.

\end{document}